\newcommand*{\todo}[1][]{%
	\ifthenelse{\equal{#1}{}}{\textcolor{red}{[\textbf{TODO}]}}%
	{\textcolor{red}{[\textbf{TODO:} #1]}}%
}
\newcommand{\bE}{\mathbb E}
\newcommand{\bR}{\mathbb{R}}
\newcommand{\cI}{\mathcal{I}}
\renewcommand{\epsilon}{\varepsilon}
\newcommand{\eps}{\boldsymbol{\epsilon}}
\newcommand{\bx}{\mathbf{x}}
\newcommand{\by}{\mathbf{y}}
\newcommand{\bz}{\mathbf{z}}
\newcommand{\bone}{\boldsymbol{1}}
\newcommand{\br}{\boldsymbol{r}}
\newcommand{\cX}{\mathcal{X}}
\newcommand{\cY}{\mathcal{Y}}
\newcommand{\cH}{\mathcal{H}}
\newcommand{\cP}{\mathcal{P}}
\newcommand{\cZ}{\mathcal{Z}}
\newcommand{\bl}{\boldsymbol{l}}
\newcommand{\bomega}{\boldsymbol{\omega}}
\newcommand{\bdeta}{\boldsymbol{\eta}}
\newcommand{\leftkop}{\mathcal{L}_{k-1}}
\newcommand{\rightkminusop}{\mathcal{R}_{k}}
\newcommand{\updownkop}{\cI_k}
\newcommand{\updownzeroop}{\cI_0}
\newcommand{\upmessk}{\beta_k}
\newcommand{\upmessi}{\beta_i}
\newcommand{\upmessiminus}{\beta_{i-1}}
\newcommand{\upmesszero}{\beta_0}
\newcommand{\upmesskminus}{\beta_{k-1}}
\newcommand{\downmessk}{\gamma_k}
\newcommand{\downmesskminus}{\gamma_{k-1}}
\newcommand{\downmesszero}{\gamma_0}
\newcommand{\leftmessk}{\delta_{k}^+}
\newcommand{\leftmesskminus}{\delta_{k-1}^+}
\newcommand{\rightmessk}{\delta_{k}^-}
\newcommand{\rightmesskminus}{\delta_{k-1}^-}
\newcommand{\rightmessend}{\delta_0^{-}}
\newcommand{\leftmessend}{\delta_K^+}
\newcommand{\leftmesszero}{\delta_0^+}
\theoremstyle{plain}
\newtheorem{theorem}{Theorem}[section]
\newtheorem{proposition}[theorem]{Proposition}
\newtheorem{lemma}[theorem]{Lemma}
\theoremstyle{definition}
\newtheorem{definition}[theorem]{Definition}
\theoremstyle{remark}
\begin{document}
\title{\bf Trajectory Inference with Smooth Schrödinger Bridges}
\author{Wanli Hong\thanks{Equal contribution. Center for Data Science, New York University; \href{mailto:wh992@nyu.edu}{wh992@nyu.edu}},~ Yuliang Shi\thanks{Equal contribution. Mathematics Department, University of British Columbia; \href{mailto:yuliang@math.ubc.ca}{yuliang@math.ubc.ca}},~ Jonathan Niles-Weed \thanks{Courant Institute of Mathematical Sciences and Center for Data Science, New York University, \href{mailto:jnw@cims.nyu.edu}{jnw@cims.nyu.edu}}}

\maketitle

\begin{abstract}
Motivated by applications in trajectory inference and particle tracking, we introduce \emph{Smooth Schrödinger Bridges}. Our proposal generalizes prior work by allowing the reference process in the Schrödinger Bridge problem to be a smooth Gaussian process, leading to more regular and interpretable trajectories in applications. Though naïvely smoothing the reference process leads to a computationally intractable problem, we identify a class of processes (including the Matérn processes) for which the resulting Smooth Schrödinger Bridge problem can be \textit{lifted} a simpler problem on phase space, which can be solved in polynomial time. We develop a practical approximation of this algorithm that outperforms existing methods on numerous simulated and real single-cell RNAseq datasets. 
\end{abstract}
\section{Introduction}
The \textit{trajectory inference problem}---reconstructing the paths of particles from snapshots of their evolution---is fundamental to modern data science, with applications in single-cell biology~\cite{HugMagTon22, SaeCanTod19, SchShuTab19, ShaQiuZho23, TonHuaWol20}, fluid dynamics~\cite{BruNoaKou20,OueXuBod06}, and astronomical object tracking~\cite{KubDenGra07,LioSmaSwe20}.
Given observations of a collection of indistinguishable particles at discrete times, the statistician aims to infer the continuous trajectories that generated this data.

A leading approach~\cite{CheConGeo19,ChiZhaHei22,LavZhaKim24} builds on \citeauthor{Sch31}'s remarkable thought experiment~\citep{Sch31,Sch32}, formulating trajectory inference as a Kullback--Leibler minimization problem.
Suppose that a set of particles is observed at times $t_0, \dots, t_K \in [0, 1]$ and that the particles' arrangement at time $t_k$ is represented by a probability measure $\mu_k$, for all $k \in [K]$.
The \textit{multi-marginal Schrödinger bridge} corresponding to these observations is the solution of
\begin{align}
	\min_{P \in \mathcal{P}(\Omega)} \mathrm{D}(P | R) \label{obj: full_schrodinger}
	\text{ s.t. } P_{k} =\mu_{k}, \, \, \forall k \in [K],
\end{align}
where $D$ is the Kullback--Leibler divergence, $P_k$ denotes the time marginal of $P$ at time $t_k$, and $R$ is the law of a reference process, typically Brownian motion. (See \cref{sec:notation} for a complete list of notation.)
This formulation has an elegant quasi-Bayesian interpretation: the Schrödinger bridge matches the observed marginal distributions while keeping particle trajectories as faithful as possible to the prior $R$.

While extensively studied both theoretically and methodologically, this classical Schrödinger Bridge (SB) approach has a critical limitation: its trajectories inherit the roughness of Brownian paths,
leading to noisier estimators and less interpretable posterior paths; see Figure~\ref{FIG:Petal_f1}, left side.
Moreover, when the experimenter aims to track the positions of individual particles in a system evolving over time, inference with the Brownian motion prior fails to ``borrow strength'' from adjacent time points, resulting in less accurate results; see Figure~\ref{FIG:Matern_1D}, left side.

The literature on trajectory inference contains various proposals for encouraging smooth paths, inspired by spline algorithms on $\bR^d$ or the dynamics of physical systems.
(A full comparison with prior work appears in~\cref{sec:prior}.)
However, from the perspective of Schrödinger's original formulation, these modifications sacrifice the clear statistical interpretation of~\eqref{obj: full_schrodinger}.
We develop a new approach: a smooth version of the Schrödinger Bridge problem in which the Brownian motion is replaced by a smooth Gaussian process.
This proposal has an appealing statistical grounding: like Gaussian process regression~\cite{RasWil06}, it provides a flexible and principled way to perform non-parametric estimation while incorporating prior smoothness assumptions.
As Figures~\ref{FIG:Petal_f1} and~\ref{FIG:Matern_1D}, right side, show, the resulting estimates significantly outperform the vanilla Schrödinger Bridge, producing better estimates with smoother paths.

\textbf{Our contributions:} \\
\begin{itemize}
	\item We propose a class of Gaussian processes suitable as priors for smooth SB problems, which we show can be \textit{lifted} to simpler SB problems on phase space.
	\item We show that a message passing algorithm for the resulting lifted SB problem converges in time quadratic in $K$, an exponential improvement over the baseline approach.
	\item We develop an efficient approximation of our algorithm that outperforms existing methods in practice, in several cases by 2--5x.
\end{itemize}

\section{Prior work}\label{sec:prior}
There has been significant prior work on both the Schrödinger bridge problem and its applications in trajectory inference.
Here, we briefly survey some important related contributions.

The Schrödinger bridge has been the subject of significant theoretical interest since its introduction; see~\cite{Leo14} for historical details and additional context.
Currently, there are many methodological approaches to the Schrödinger bridge problem~\citep[see, e.g.][]{De-ThoHen21,GusKolKor23,PavTriTab21,PooNil24}, almost all of which focus on the two-marginal case (in our notation $K = 1$).
As was forcefully pointed out by~\citeauthor{AltBoi22}, the multi-marginal case presents significantly greater computational challenges~\citep{AltBoi22, AltBoi23}; indeed, no general-purpose algorithms with running time polynomial in $K$ exist.
However, it is possible to obtain polynomial-time algorithms in certain special cases~\cite{AltBoi23}.
Our work shows that smooth Schrödinger bridges with GAP priors are such a case.

The connection between multi-marginal entropic OT problems and belief propagation has been highlighted in a number of prior works dating back more than two decades~\cite{SinZhaChe22,TehWel01}, though apparently without connection to smooth Schrödinger Bridge problems.
The exception is the ``momentum Schrödinger bridge'' of~\citep*{CheConGeo19,CheLiuTao23}, which is, implicitly, an example of a smooth Schrödinger with prior given by integrated Brownian motion; the Sinkhorn algorithm proposed in~\cite{CheConGeo19} can be viewed as an \textit{ad hoc} version of belief propagation for this special case.
However, those works do not make the connection with smooth Gaussian processes, and their algorithm does not apply to more general GAPs.

The importance of the Schrödinger bridge problem for trajectory inference was implicitly recognized in the pioneering work~\cite{SchShuTab19}, and developed mathematically by~\cite{ChiZhaHei22,LavZhaKim24}.
The fact that the vanilla Shrödinger bridge problem uses a Markov prior is crucial for algorithms~\cite{ChiZhaHei22}, but has also been recognized as an undesirable feature that has led to the development of different trajectory inference methods which give rise to smoother paths.
Apart from~\cite{SheBerBro24}, which proposes a ``robust'' version of the Schrödinger Bridge problem in which a single reference process is replaced by a family of (Markov) reference processes, these alternate methods largely abandon Schrödinger's formulation and develop very different techniques.
Many of the most successful methods are based on measure-valued generalizations of splines~\cite{banerjee2024efficient,BenGalVia19, CheConGeo18,pmlr-v130-chewi21a,clancy2022wasserstein,JusRumErb24}.
Other methods based on neural networks have also been proposed~\cite{HugMagTon22,TonHuaWol20}.
We compare these methods in our experimental section.

\section{Background}\label{sec:background}
\subsection{The Multi-Marginal Schrödinger Bridge}
Let $R$ be a measure on the space $\Omega = C([0, 1]; \mathbb{R}^d)$ of continuous $\mathbb{R}^d$-valued paths, and let $\mu_0, \dots, \mu_K$ be probability measures on $\mathbb{R}^d$.
Fix a sequence of times $t_0 = 0 < t_1 \cdots < t_K = 1$.
Given $\omega \in \Omega$, we write $\omega_k = \omega(t_{k})$ for $k \in [K]$.
Given a probability measure $P$ on $\Omega$ and $k \in [K]$, let $P_k$ be the marginal distribution of $P$ at time $t_k$, that is, let $P_k$ be the element of $\mathcal{P}(\mathbb{R}^d)$ obtained by pushing $P$ forward under the map $\omega \mapsto \omega_k$.
The multi-marginal Schrödinger Bridge~\eqref{obj: full_schrodinger} exists under suitable moment conditions on $\mu_0, \dots, \mu_K$~\cite{Leo14}, and the strict convexity of the Kullback--Leibler divergence guarantees that when a solution exists, it is unique.

\begin{figure}
	\centering
	\includegraphics[width=0.8\textwidth]{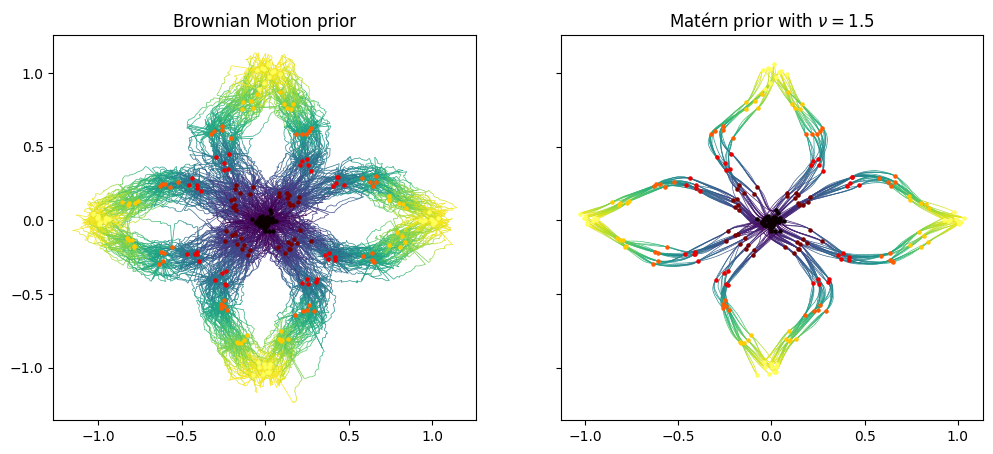}
	\caption{Comparison between the classical SB and smooth SB with Matérn prior on a trajectory inference task for the petal dataset~\cite{HugMagTon22}.  Colored points are observed data and paths are inferred trajectories. Our smooth SB approach generates much smoother and more concentrated trajectories.}
		\label{FIG:Petal_f1}
\end{figure}
\begin{figure}
	\centering
	\includegraphics[width=.8\textwidth]{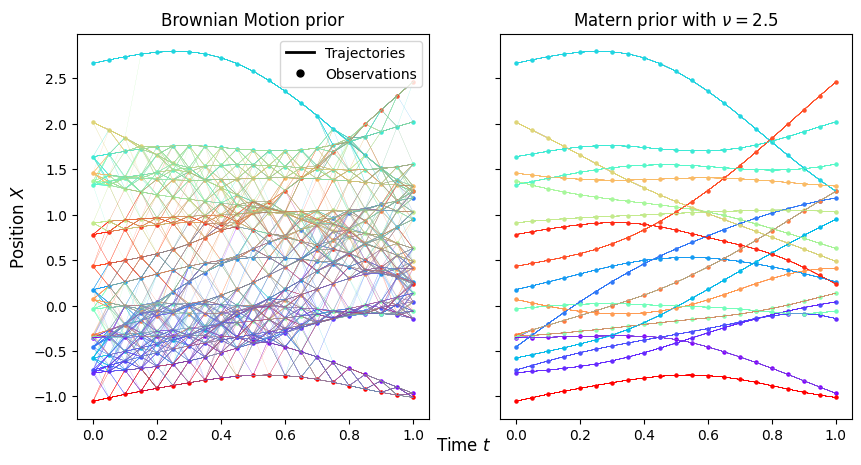}

	\caption{Comparison between the classical SB and smooth SB. The observations come from 20 independent trajectories of a Matérn Gaussian process with $\nu = 3.5$. The two pictures depict the posterior identification of particles obtained by solving the SB problem with two different priors. Our smooth SB approach recovers trajectories much more accurately.}
		\label{FIG:Matern_1D}
\end{figure}
Though phrased as a minimization problem over the space $\cP(\Omega)$, the Schrödinger bridge problem admits a ``static'' reformulation as a multi-marginal entropic optimal transport problem over the space $\cP(\bR^{d(K+1)})$.
Write $R_{[K]}$ for the joint law of $\bomega := (\omega_{0}, \dots, \omega_{K})$ for $\omega \sim R$.
\begin{lemma}\label{lem:dyn_to_stat}
	There is an one-to-one correspondence between solutions to~\eqref{obj: full_schrodinger} and solutions to
	\begin{equation}\label{eq:static_sb}
		\min_{P_{[K]} \in \cP(\bR^{d(K+1)})} \mathrm{D}(P_{[K]} | R_{[K]}),~~P_{k} =\mu_{k},~~\forall k \in [K].
	\end{equation}
	Moreover, if each measure $\mu_k$ is absolutely continuous with finite entropy and $R_{[K]}$ has density $\exp(- C(\bomega))$, then this problem is equivalent to
	\begin{equation}\label{eq:eot_problem}
		\min_{P_{[K]} \in \cP(\bR^{d(K+1)})} \int C(\bomega)  P_{[K]}{(\bomega)}\, d \bomega+ \mathrm{D}\left(P_{[K]} \Big| \bigotimes_{k = 0}^K \mu_k \right)~~P_{k} =\mu_{k},~~\forall k \in [K].
	\end{equation}
\end{lemma}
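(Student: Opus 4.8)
The plan is to treat the two assertions in turn, each reduced to the chain rule (disintegration identity) for relative entropy.

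\emph{The dynamic-to-static bijection.} Since $\Omega = C([0,1];\bR^d)$ is Polish, every $P \in \cP(\Omega)$ with $\mathrm{D}(P|R) < \infty$ satisfies $P \ll R$ and admits a regular disintegration $P = P_{[K]} \otimes P^{\bomega}$, in which $P_{[K]} \in \cP(\bR^{d(K+1)})$ is the law of $\bomega = (\omega_0,\dots,\omega_K)$ and $\bomega \mapsto P^{\bomega}$ is the conditional law of the whole path given those $K+1$ coordinates; write $R = R_{[K]} \otimes R^{\bomega}$ similarly. The first step is to invoke additivity of relative entropy along this disintegration,
\[
\mathrm{D}(P|R) \;=\; \mathrm{D}(P_{[K]} \,|\, R_{[K]}) \;+\; \int \mathrm{D}(P^{\bomega} \,|\, R^{\bomega})\, dP_{[K]}(\bomega),
\]
so that $\mathrm{D}(P|R) \ge \mathrm{D}(P_{[K]} \,|\, R_{[K]})$ with equality exactly when $P^{\bomega} = R^{\bomega}$ for $P_{[K]}$-a.e.\ $\bomega$. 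Because the constraints $P_k = \mu_k$ constrain only the coordinate marginals of $P_{[K]}$, it follows that $P \mapsto P_{[K]}$ sends feasible points of~\eqref{obj: full_schrodinger} to feasible points of~\eqref{eq:static_sb} without increasing the objective, while the ``gluing'' map $Q \mapsto Q \otimes R^{\bomega}$ is feasibility- and objective-preserving. Restricting both maps to optimizers, $P \mapsto P_{[K]}$ is then injective with inverse the gluing map, which yields the claimed one-to-one correspondence; uniqueness on both sides (strict convexity of $\mathrm{D}$) makes the statement meaningful even without assuming existence a priori.

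\emph{Reduction to entropic optimal transport.} For the second assertion I would argue by a direct density computation. If $R_{[K]}$ has Lebesgue density $e^{-C(\bomega)}$ and each $\mu_k$ is absolutely continuous with finite differential entropy $H(\mu_k) := -\int \log\frac{d\mu_k}{d\bx}\, d\mu_k$, then any feasible $P_{[K]}$ with finite objective is itself absolutely continuous, with some density $p$, and
\[
\mathrm{D}(P_{[K]}\,|\,R_{[K]}) \;=\; \int p(\bomega)\log p(\bomega)\, d\bomega \;+\; \int C(\bomega)\, p(\bomega)\, d\bomega.
\]
Letting $m(\bomega) = \prod_{k=0}^K \tfrac{d\mu_k}{d\bx}(\omega_k)$ denote the density of $\bigotimes_{k=0}^K \mu_k$ and writing $\log p = \log\tfrac{p}{m} + \log m$, the entropy term splits as $\mathrm{D}\big(P_{[K]}\,\big|\,\bigotimes_k \mu_k\big) + \sum_{k=0}^K \int \log\tfrac{d\mu_k}{d\bx}(\omega_k)\, dP_k(\omega_k)$, and the marginal constraint $P_k = \mu_k$ turns the $k$-th summand into $-H(\mu_k)$. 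Hence on the feasible set
\[
\mathrm{D}(P_{[K]}\,|\,R_{[K]}) \;=\; \int C(\bomega)\, p(\bomega)\, d\bomega \;+\; \mathrm{D}\Big(P_{[K]}\,\Big|\,\bigotimes_{k=0}^K \mu_k\Big) \;-\; \sum_{k=0}^K H(\mu_k),
\]
and since $\sum_k H(\mu_k)$ is a finite constant independent of $P_{[K]}$, the minimizers of~\eqref{eq:static_sb} and~\eqref{eq:eot_problem} coincide.

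\emph{Main obstacle.} The computation in the second part is essentially bookkeeping once one checks that the rearrangement is not an indeterminate $\infty - \infty$ --- which is exactly what the finite-entropy hypothesis on the $\mu_k$ (and, for the GAP priors of interest, the fact that $C$ is bounded below) guarantees. The genuinely delicate part is the measure theory behind the first step: establishing the regular disintegration on path space, verifying that $P \ll R$ descends to $P^{\bomega} \ll R^{\bomega}$ for $P_{[K]}$-a.e.\ $\bomega$, invoking the disintegration identity for relative entropy at that level of generality, and confirming that the gluing map $Q \mapsto Q \otimes R^{\bomega}$ indeed returns an element of $\cP(\Omega)$ with the prescribed time marginals and finite relative entropy to $R$.
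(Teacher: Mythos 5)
Your proposal is correct and follows essentially the same route as the paper: the chain rule for relative entropy along the disintegration over $\bomega$ for the first claim, and the splitting $\log p = \log(p/m) + \log m$ together with the marginal constraints and the finite-entropy hypothesis for the second. The additional care you flag about regular disintegrations and avoiding $\infty - \infty$ is a reasonable elaboration of points the paper's sketch leaves implicit, but it does not change the argument.
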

The reformulation in Lemma~\ref{lem:dyn_to_stat} is essential because it eliminates the need to optimize over probability measures on the infinite-dimensional space $\Omega$.
Though~\eqref{eq:eot_problem} was derived for absolutely continuous measures, the expression is sensible for arbitrary marginal measures.
We, therefore, take~\eqref{eq:eot_problem} as the basic definition of the multi-marginal Schrödinger bridge in what follows.
Importantly, when the measures $\mu_k$ are finitely supported, the resulting optimization problem is finite-dimensional.

\subsection{Sinkhorn's algorithm for multi-marginal transport}
Specializing to the case where each of the marginals $\mu_k$ is supported on a finite set $\cX_k$ of size at most $n$, the joint measure $P_{[K]}$ can be represented by a finite---albeit exponentially large---order $(K+1)$ tensor, of total size $n^{K+1}$.

A direct method to solve~\eqref{eq:eot_problem} in this case uses \citeauthor{Sin67}'s celebrated scaling Algorithm \ref{alg: Multi_Sinkhorn}.
Sinkhorn's algorithm is based on the observation that the optimal solution $P^*_{[T]}$ to~\eqref{eq:eot_problem} is of the form $P^*_{[T]}(\bx) = \exp(-C(\bx)) \prod_{k=0}^K v^*_k(x_k)$ for all $\bx = (x_0, \dots, x_K) \in \prod_{k=0}^K \cX_k$, for some ``scaling functions'' $v^*_k: \cX_k \to \bR^+$, $k \in [K]$.
\Cref{alg: Multi_Sinkhorn} produces a sequence of iterates $v^{(\ell)}_k$, $k \in [K]$, $\ell \geq 0$, which converge to these optimal scalings.

\begin{algorithm}
    \caption{Vanilla Sinkhorn Algorithm} \label{alg: Multi_Sinkhorn}
    
    \begin{algorithmic}
        \STATE \textbf{Input:} $\mu_0, ..., \mu_K$
        \STATE Initialize $v^{(0)}_0$, ..., $v^{(0)}_K \in ({\mathbb{R}^{+}})^n$, $\ell = 0$.
        \WHILE{not converged}
                \STATE $\ell \gets \ell + 1$
        \FOR{$k = 0, ..., K$}
        \STATE Compute $\mathbf{S}^{(\ell)}_k$ using  \eqref{eq: augmented_sinkhorn_calculate_marginal} with $\left\{ v^{(\ell)}_i\right\}_{i < k}$ and $\left\{ v^{(\ell-1)}_i\right\}_{i > k}$
        \STATE $v^{(\ell)}_k \leftarrow \mu_k \oslash \mathbf{S}^{(\ell)}_k$
        \ENDFOR
        \ENDWHILE
        \STATE Return $(v^{(\ell)}_k)_{k \in [K]}$
    \end{algorithmic}
\end{algorithm}

The key subroutine in \cref{alg: Multi_Sinkhorn} is an operation called ``marginalization'': given current iterates $(v_i)_{i \in [K]}$, the marginalization along coordinate $k$ is the function $\mathbf{S}_k: \cX_k \to \bR^+$ defined by
\begin{equation}
	\mathbf{S}_k(x_k) := \sum_{\substack{i = 0 \\ i \neq k}}^{K} \sum_{x_i \in \cX_i} \exp(-C(\bx)) \prod_{\substack{i = 0 \\ i \neq k }}^{K} v_i(x_i) \,. \label{eq: augmented_sinkhorn_calculate_marginal}
\end{equation}
The complexity of computing $\mathbf{S}_k$ directly is exponential in $K$, since the sum in \eqref{eq: augmented_sinkhorn_calculate_marginal} has an exponential number of terms.
Moreover, no \textit{general} polynomial-time algorithms for even approximating~\eqref{eq: augmented_sinkhorn_calculate_marginal} exist under standard computational complexity assumptions~\citep[Lemma 3.7]{AltBoi23}.
Together, these facts suggest that~\eqref{eq:eot_problem} will only be computationally feasible under special assumptions on $R$.
%%, for $\mu_k$ which are not necessarily absolutely continuous, we adopt the 
%The benefit of the static formulation in \cref{lem:dyn_to_stat} is that if the measures $\mu_k$ are supported on a finite set $\cX$, the joint measure $P_{[K]}$ can be represented by a finite---albeit exponentially large---order $(K+1)$ tensor, of total size $|\cX|^{K+1}$.

A crucial observation~\cite{AltBoi23,ChiZhaHei22}, which has driven the near-universal choice of Brownian motion as a prior, is that when $R$ is a \textit{Markov} process,~\eqref{eq: augmented_sinkhorn_calculate_marginal} can be computed in $\mathrm{poly}(K, |\cX|)$ time.
The tractability of marginalization when $R$ is a Markov process follows from the decomposition
\begin{equation*}
	\exp(-C(\bx) ) = R_{[K]}(\bx) = R_0(x_{0}) \prod_{k=1}^KR_{k \mid k-1}(x_{k} | x_{{k-1}})\,,
\end{equation*}
which shows that the exponential-size sum in~\eqref{eq: augmented_sinkhorn_calculate_marginal} factors into $K$ sums, each of which can be computed efficiently.
The assumption that $R$ is Markov extends even to works that consider priors other than Brownian motion~\cite{BunHsiCut23,VarThoLam21}.

This raises what appears to be an inherent tension: efficient algorithms for~\eqref{eq:eot_problem} rely on the assumption that the process $R$ is Markov; however, limiting to the case of Markov processes necessarily precludes consideration of priors $R$ with smooth sample paths (see Lemma \ref{lemma: Gauss-Markov Non-Diff}). 
This observation suggests the pessimistic conclusion that \textit{no} practical algorithm exists for solving Schrödinger bridges with smooth priors.

However, the key contribution of this work is to show that this conclusion is \textit{false}.
The next section identifies a class of smooth priors for which~\eqref{eq:eot_problem} can be solved efficiently.

\subsection{Autoregressive Gaussian processes}
To develop our proposal for efficient smooth Schrödinger bridges, we focus on the class of \textit{continuous-time  Gaussian autoregressive processes}, a classic model in statistics and signal processing~\cite{Phi59}.
They are also a widespread choice in applications: as we show below (\cref{thm:gp}), the famous \textit{Matérn kernel} gives rise to such processes.
While it is known that such processes offer crucial efficiency benefits in Gaussian process regression~\cite{GilSaaCun15}, their algorithmic implications for the SB problem have not been explored prior to this work.
\begin{definition}[{\citealp[Appendix B]{RasWil06}}]
	Let $m$ be a positive integer.
	A Gaussian process $t \mapsto \omega(t)$ defined on $C^{m-1}(\bR; \bR^d)$ is a \textit{Gaussian autoregressive process} (GAP) of order $m$ if it is a stationary solution to the stochastic differential equation
	\begin{equation*}
		\frac{d^{m}}{d t^{m}} \omega(t) + a_{m-1} \frac{d^{m-1}}{d t^{m-1}} \omega(t) + \dots + a_0 \omega(t) =  \sigma \xi(t)\,,
	\end{equation*}
	for some constants $a_{0}, \dots, a_{m-1} \in \bR$ and $\sigma \in \bR^{d \times d}$, where $\xi(t)$ denotes a white-noise process on $\bR^d$ with independent coordinates.
\end{definition}

The key fact about GAPs, which we leverage to develop efficient algorithms, is that, even though a GAP $\omega$ is typically not Markov, the process $\eta := (\omega, d \omega/dt, \dots, d^{m-1} \omega/ dt^{m-1})$ taking values in phase space $\bR^{d \times m}$ \textit{is} a Markov process. We call $\eta$ the ``lifted'' Gaussian process corresponding to $\omega$.
The main observation driving our practical algorithm for smooth Schrödinger bridges is that when $R$ is a GAP, problem~\eqref{eq:eot_problem} admits an efficient algorithm obtained by \textit{lifting} the optimization problem to phase space.

GAPs form a rich class of smooth Gaussian processes.
Moreover, they admit a simple characterization in terms of their covariance functions.
Recall that a Gaussian process $\omega$ taking values in $C(\bR; \bR^d)$ is characterized by two functions $m: \bR \to \bR^d$ and $\kappa: \bR^2 \to \bR^{d \times d}$, which satisfy
\begin{equation*}
	m(t) = \bE \omega(t), \quad \quad \kappa(s, t)_{ij} = \operatorname{cov}(\omega(s)_i, \omega(t)_j)\,.
\end{equation*}
The process is \textit{stationary} if $\kappa(s, t) = k(t - s)$ for some $k: \bR \to  \bR^{d \times d}$.
For notational convenience, we focus on the zero-mean case, where $m \equiv 0$, though our results apply to the general case as well.
Zero-mean, stationary Gaussian processes are characterized entirely by the covariance function $k$.
 
We summarize the important properties of GAPs in the following theorem.

\begin{theorem}[{\citealp[Section 2.2]{Saa12}}]\label{thm:gp}
	Let $\omega$ be a zero-mean continuous, stationary Gaussian process on $\bR^d$ with covariance function $k$.
	Then $\omega$ is a GAP of order $m$ if and only if its spectral density $S_\omega(\tau) := \frac{1}{2 \pi }\int_{-\infty}^\infty k(t) e^{- i t \tau} d t \in \mathbb{C}^{d \times d}$ is of the form
	\begin{equation*}
		S_\omega(\tau) = \sigma \sigma^\tau/f(\tau^2)\,, \quad \quad \forall \tau \in \bR
	\end{equation*}
	where $f$ is a polynomial of degree $m$.
	
	Moreover, if $\omega$ is a GAP of order $m$, then $\eta := (\omega, d \omega/dt, \dots, d^{m-1} \omega/ dt^{m-1})$ is a zero-mean, stationary Gauss--Markov process.
\end{theorem}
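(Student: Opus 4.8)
The plan is to work in the frequency domain, where the statement reduces to an elementary spectral-factorization fact about polynomials. Write $P(z) = z^m + a_{m-1}z^{m-1} + \dots + a_0$ for the characteristic polynomial of the defining SDE, so that the differential operator on its left-hand side is $P(d/dt)$. A zero-mean stationary Gaussian process is determined by its matrix-valued spectral density $S_\omega$; white noise $\xi$ has flat spectral density proportional to $I_d$; and applying the constant-coefficient operator $P(d/dt)$ to a stationary process multiplies its spectral density by $|P(i\tau)|^2$. Concretely, passing to the Cramér spectral representation $\omega(t) = \int e^{it\tau}\, dZ(\tau)$, differentiation in $t$ becomes multiplication by $i\tau$, and the SDE becomes $P(i\tau)\, dZ(\tau) = \sigma\, dW(\tau)$ for the noise's spectral measure $W$. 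Reading off second moments gives
\[
S_\omega(\tau) = \frac{\sigma\sigma^\top}{|P(i\tau)|^2}
\]
up to a harmless constant absorbed into $\sigma$ (note $|P(i\tau)|^2$ is scalar, since the $a_j$ are). Because the $a_j$ are real, $\overline{P(i\tau)} = P(-i\tau)$, so $|P(i\tau)|^2 = P(i\tau)P(-i\tau)$ is invariant under $\tau \mapsto -\tau$, hence equals a polynomial $f$ evaluated at $\tau^2$, with $\deg f = \deg P = m$. This gives the forward direction.

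For the converse, suppose $S_\omega(\tau) = \sigma\sigma^\top/f(\tau^2)$ with $\deg f = m$. For this to be the integrable, positive semidefinite spectral density of a \emph{continuous} process, $f(\tau^2)$ must be strictly positive for all real $\tau$ and of positive degree; strict positivity says the polynomial $z \mapsto f(-z^2)$ has no root on the imaginary axis. Its roots are symmetric under $z \mapsto -z$ and $z \mapsto \bar z$, so they split into two equinumerous sets, one with $\mathrm{Re}(z) < 0$ and one with $\mathrm{Re}(z) > 0$; the former is closed under conjugation, hence is the root set of a real Hurwitz polynomial $P$ of degree $m$ with $f(-z^2) = c\,P(z)P(-z)$, i.e.\ $f(\tau^2) = c\,|P(i\tau)|^2$. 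Absorbing $c$ into $\sigma$, the stable SDE $P(d/dt)\omega = \sigma\xi$ has a unique stationary solution, whose spectral density is $\sigma\sigma^\top/f(\tau^2)$ by the forward computation; since the spectral density determines a zero-mean stationary Gaussian law, this solution equals $\omega$, so $\omega$ is a GAP of order $m$. One also checks that $\int |\tau|^{2(m-1)} S_\omega(\tau)\,d\tau < \infty$, the integrand decaying like $|\tau|^{-2}$, so $\omega$ has $m-1$ mean-square-continuous derivatives (matching the $C^{m-1}$ domain in the definition) while the $m$-th does not exist.

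For the ``moreover'' claim, write the GAP's SDE in companion state-space form: with $\eta = (\omega, \dot\omega, \dots, \omega^{(m-1)}) \in \bR^{dm}$,
\[
d\eta(t) = \mathcal{A}\,\eta(t)\,dt + \mathcal{B}\,dB(t),\qquad \mathcal{A} = A \otimes I_d,\quad \mathcal{B} = e_m \otimes \sigma,
\]
where $A$ is the $m \times m$ companion matrix of $P$, $e_m \in \bR^m$ is the last standard basis vector, and $B$ is a standard Brownian motion. This is a linear SDE, so its solution is a Gaussian diffusion — in particular Gauss--Markov — and it is zero-mean for zero-mean initial data. Since $P$ is Hurwitz, $\mathcal{A}$ is Hurwitz, so the Lyapunov equation $\mathcal{A}\Sigma + \Sigma\mathcal{A}^\top + \mathcal{B}\mathcal{B}^\top = 0$ has a unique solution $\Sigma_\infty \succeq 0$; initializing $\eta(0) \sim \mathcal{N}(0, \Sigma_\infty)$ — the law induced by the stationary $\omega$ — makes $\eta$ stationary. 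Hence $\eta$ is a zero-mean, stationary Gauss--Markov process.

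I expect the only real difficulty to be one of rigor rather than ideas: justifying the Fourier/spectral manipulations at the level of the Cramér representation of stationary (non-$L^2$-in-time) processes — in particular that term-by-term differentiation under $\int e^{it\tau}\,dZ(\tau)$ is legitimate exactly through order $m-1$ — and making the spectral-factorization step airtight, namely confirming that positivity of $f$ on the real axis rules out imaginary-axis roots of $z \mapsto f(-z^2)$ and that the chosen half of the roots is conjugation-closed so that $P$ has real coefficients. The state-space reformulation and Lyapunov-equation bookkeeping are then routine.
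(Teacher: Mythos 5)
The paper does not prove this theorem itself --- it is imported verbatim from \citet[Section 2.2]{Saa12}, so there is no in-paper argument to compare against. Your proof (spectral representation to get $S_\omega(\tau)=\sigma\sigma^\top/|P(i\tau)|^2$ for the forward direction, spectral factorization of $f(-z^2)$ into a real Hurwitz factor for the converse, and the companion-form linear SDE with the Lyapunov stationary covariance for the Gauss--Markov lift) is correct and is essentially the standard argument underlying the cited result; the rigor caveats you flag about the Cram\'er representation and the conjugation-closedness of the left-half-plane roots are exactly the right ones and are routine to discharge.
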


With \cref{thm:gp} in hand, we easily obtain many examples of GAPs.
Crucially, this includes Gaussian processes corresponding to the Matérn kernel, the most popular smooth Gaussian process in applications, defined by the covariance function
\begin{equation}\label{eq:materncov}
	k(t) \propto \sigma^2(\sqrt{2 \nu} t/\ell)^\nu K_\nu(\sqrt{2 \nu} t/\ell)\,,
\end{equation}
where $K_\nu$ is the modified Bessel function of the second kind and $\nu$ and $\ell$ are positive parameters.
\begin{proposition}[{\citealp[Section 4.1]{HarSAr10}}]
	Suppose $\omega$ is a Gaussian process on $C(\bR; \bR)$ whose covariance function is a Matérn kernel for smoothness parameter $\nu = m - 1/2$, for $m \in \mathbb N$.
	Then $\omega$ is a GAP of order $m$.
\end{proposition}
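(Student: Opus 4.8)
The plan is to apply the spectral characterization of GAPs in \cref{thm:gp}. Since the Matérn process $\omega$ is by hypothesis zero-mean, continuous, and stationary, \cref{thm:gp} reduces the claim to a single computation: it suffices to show that the spectral density $S_\omega(\tau) = \frac{1}{2\pi}\int_{-\infty}^{\infty} k(t) e^{-it\tau}\,dt$ equals a positive constant divided by a polynomial of degree exactly $m$ evaluated at $\tau^2$.

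First I would recall the classical Fourier pair underlying the Matérn family: for $c > 0$ and $\alpha > 1/2$, the integrable function $\tau \mapsto (c^2 + \tau^2)^{-\alpha}$ on $\bR$ is, up to a positive constant, the Fourier transform of $t \mapsto (c\lvert t\rvert)^{\alpha - 1/2} K_{\alpha - 1/2}(c \lvert t\rvert)$ (see, e.g., the Matérn spectral density in \citealp[Section~4.2]{RasWil06}, specialized to one dimension). Matching this against the definition \eqref{eq:materncov} with $c = \sqrt{2\nu}/\ell$ and $\alpha = \nu + 1/2$ shows that the Matérn process with parameters $\nu,\ell$ has
\[
S_\omega(\tau) \;\propto\; \Bigl( \tfrac{2\nu}{\ell^2} + \tau^2 \Bigr)^{-(\nu + 1/2)}, \qquad \tau \in \bR,
\]
where the suppressed proportionality constant is positive and finite; finiteness uses $\alpha = \nu + 1/2 > 1/2$, which is exactly what makes $(c^2+\tau^2)^{-\alpha}$ integrable.

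Next I would substitute the hypothesis $\nu = m - 1/2$, so that $\nu + 1/2 = m \in \mathbb N$. Then
\[
S_\omega(\tau) \;\propto\; \Bigl( \tfrac{2\nu}{\ell^2} + \tau^2 \Bigr)^{-m} \;=\; \frac{\mathrm{const}}{f(\tau^2)}, \qquad f(x) := \Bigl( \tfrac{2\nu}{\ell^2} + x \Bigr)^{m},
\]
and $f$ is a polynomial of degree exactly $m$ with positive leading coefficient. This is precisely the form required by \cref{thm:gp} (with the positive constant playing the role of $\sigma\sigma^\tau$ in the scalar case $d=1$), so that theorem immediately yields that $\omega$ is a GAP of order $m$, as claimed. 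In particular, no separate argument is needed to exhibit the SDE coefficients $a_0,\dots,a_{m-1}$ or the noise intensity $\sigma$: they are produced by the equivalence in \cref{thm:gp}, via the spectral factorization $f(\tau^2) = \lvert p(i\tau)\rvert^2$ with the stable polynomial $p(s) = (s + \sqrt{2\nu}/\ell)^m$, though this factorization is not needed for the statement as phrased.

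The only step that is not purely formal is the Fourier-transform identity for the Matérn covariance in the second paragraph; this is classical but one must track the normalization constants (which depend on the Fourier convention fixed in \cref{thm:gp}) and invoke $\nu > 0$ to ensure $S_\omega$ is a genuine, finite, positive spectral density. An alternative would be to cite \citet{HarSAr10} for this spectral density directly; I would nonetheless include the short derivation because it makes the reduction to \cref{thm:gp} completely transparent.
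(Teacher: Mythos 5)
Your proposal is correct, and it fills in exactly the argument the paper delegates to the citation: the paper offers no proof of this proposition, only a pointer to \citet{HarSAr10}, whose Section~4.1 derivation is precisely the spectral-density computation you give (Mat\'ern spectral density $\propto (2\nu/\ell^2+\tau^2)^{-(\nu+1/2)}$, which becomes $1/f(\tau^2)$ with $\deg f = m$ when $\nu = m-1/2$, followed by an appeal to the characterization in \cref{thm:gp}). The only care needed is the one you already flag — matching the Fourier normalization in \cref{thm:gp} to the Bessel-function transform pair — so there is nothing substantive to add.
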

By \cref{thm:gp}, the same holds true in the multidimensional case if each coordinate is taken to be independent of Matérn covariance.

Though the above characterization is formulated for stationary processes, our algorithm also applies to the nonstationary case, for instance, to the case of integrated Brownian motion: $\frac{d^{m}}{dt^{m}} \omega(t) = \sigma \xi(t)$.
Taking $m = 1$ recovers the standard Schr\"odinger bridge, and $m = 2$ gives rise to the so-called ``momentum Schr\"odinger bridge,'' previously studied in~\citep*{CheConGeo19,CheLiuTao23}.
The integrated Brownian motion prior possesses close connections to spline regression; see~\citep[Section 2.2.3]{Saa12} for more details. 

\section{Lifting Schrödinger Bridges}\label{sec:lifted}
The remainder of this paper is devoted to giving an efficient algorithm for smooth SBs whose reference process is a GAP.
In this section, we leverage the structure of GAPs to \textit{lift}~\eqref{eq:eot_problem} to a higher-dimensional problem, with better structure.
In Section~\ref{Sec: BP-continuous}, we show that this reformulated problem can be solved by a belief propagation algorithm in a number of iterations that scales \textit{linearly} with $K$.
Finally, in Section~\ref{sec:approx_bp}, we develop a practical approximation of the belief propagation algorithm, with overall runtime quadratic in $K$.

In what follows, for notational convenience, we focus on the case $d = 1$.
(We discuss the runtime considerations assciated with larger $d$ in Section~\ref{sec:practical}.)

Let $R$ be the law of a mean zero GAP $\omega$ of order $m$ for an integer $m > 0$.
\Cref{thm:gp} guarantees that $\eta := (\omega, d\omega/dt, \ldots, d^{m-1}\omega/dt^{m-1})$ is a stationary Gauss--Markov process on $\bR^{m}$, whose law we denote by $\tilde R$.
We write $\tilde R_{[K]}$ for the joint law of the finite-dimensional vector $\bdeta := (\eta_{0}, \dots, \eta_{K})$, which is Gaussian with mean $0$ and covariance matrix $\tilde \Sigma \in \bR^{m(K+1) \times m(K+1)}$.

We first show that the smooth Schrödinger bridge problem corresponding to $R$ can be rewritten in terms of $\tilde R$.
Suppose for concreteness that $\mu_k$ is supported on a finite set $\cX_k \subseteq \bR$, and write $\cX_{[K]} = \prod_{k=0}^K \cX_k$.
We write $\cY_{[K]} := \bR^{(m-1)(K+1)}$.
Given a probability measure $\tilde P$ on phase space $\cZ_{[K]} := \cX_{[K]} \times \cY_{[K]}$ whose marginal distribution on $\cY_{[K]}$ is absolutely continuous, we write $p(\bx, \by)$ for its density with respect to the product of the counting measure on $\cX_{[K]}$ and the Lebesgue measure on $\cY_{[K]}$.
We will use the variable $\bz = (\bx, \by)$ to denote an element of $\cZ_{[K]}$.

We obtain the following:
\begin{lemma}\label{lem:eot_to_lifted}
		There is a one-to-one correspondence between solutions to~\eqref{eq:eot_problem} and solutions to
	\begin{equation}
		\min_{p} \frac 12 \int_{\cY_{[K]}} \sum_{\bx \in \cX_{[K]}} \bz^\top \tilde \Sigma^{-1} \bz \, p(\bz)\, d \by - \cH(p),~~ %+ \int_{\cY_{[K]}} \sum_{\bx \in \cX_{[K]}} p(\bz) \log p(\bz) \, d\bz\,, \\
		p_{x_k} = \mu_k,~~\forall k \in [K], \label{eq:lifted_eot}
	\end{equation}
	where $\cH(p) := - \int_{\cY_{[K]}} \sum_{\bx \in \cX_{[K]}} p(\bz) \log p(\bz) \, d\by$  and the minimization is taken over all densities on  $\cZ_{[K]}$ under which the marginal law of $x_k$ is equal to $\mu_k$. 
\end{lemma}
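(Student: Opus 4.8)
The plan is to exhibit an explicit bijection between feasible points of the two problems that preserves the position marginals and under which the objective of \eqref{eq:lifted_eot} equals the objective of \eqref{eq:eot_problem} plus a fixed constant plus a nonnegative ``conditional'' term; optimizers are then exactly the feasible points on which the extra term vanishes, and they correspond. By \cref{thm:gp}, $\tilde R_{[K]} = \mathcal N(0,\tilde\Sigma)$ on $\bR^{m(K+1)}$; I assume, as is standard for a nondegenerate GAP, that $\tilde\Sigma \succ 0$, so $\tilde R_{[K]}$ has an everywhere positive density $\tilde r$. Split the phase-space vector as $\bz = (\bx,\by)$ with $\bx$ the position coordinates $(\omega(t_0),\dots,\omega(t_K))$ and $\by \in \cY_{[K]}$ the remaining derivative coordinates. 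Then the marginal of $\tilde R_{[K]}$ on the position block is precisely $R_{[K]}$, whose density is $e^{-C}$ by hypothesis; write $\rho_\bx(\by)$ for the Gaussian conditional density of $\by$ given $\bx$ under $\tilde R_{[K]}$, so that $\tilde r(\bx,\by) = e^{-C(\bx)}\,\rho_\bx(\by)$.

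\textbf{The lifting map.} Given a feasible $P_{[K]}$ for \eqref{eq:eot_problem} --- necessarily a probability vector on $\cX_{[K]}$ --- define a density on $\cZ_{[K]}$ by $p(\bx,\by) := P_{[K]}(\bx)\,\rho_\bx(\by)$. Its position marginal is $P_{[K]}$, so $p$ is feasible for \eqref{eq:lifted_eot}. Conversely, any feasible $p$ for \eqref{eq:lifted_eot} has position marginal $p_x$, which is feasible for \eqref{eq:eot_problem}, and decomposes as $p(\bx,\by) = p_x(\bx)\,q_\bx(\by)$ for conditional densities $q_\bx$ on $\cY_{[K]}$. Since $\tilde r(\bz) \propto \exp(-\tfrac12 \bz^\top\tilde\Sigma^{-1}\bz)$, there is a constant $c_0$ with $\tfrac12\bz^\top\tilde\Sigma^{-1}\bz = -\log\tilde r(\bz) - c_0$, so the objective of \eqref{eq:lifted_eot} at $p$ equals $\int_{\cY_{[K]}}\sum_\bx p(\bz)\log\bigl(p(\bz)/\tilde r(\bz)\bigr)\,d\by - c_0$. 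Substituting $p = p_x q_\bx$ and $\tilde r = e^{-C}\rho_\bx$ and integrating out $\by$ atom by atom in $\bx$,
\[
  \int_{\cY_{[K]}}\sum_{\bx} p(\bz)\log\frac{p(\bz)}{\tilde r(\bz)}\,d\by \;=\; \sum_{\bx} p_x(\bx)\Bigl(C(\bx)+\log p_x(\bx)\Bigr) \;+\; \sum_{\bx} p_x(\bx)\,\mathrm{D}\bigl(q_\bx \,|\, \rho_\bx\bigr).
\]
Under the constraints $p_{x_k}=\mu_k$ one has $\sum_\bx p_x(\bx)\log p_x(\bx) = \mathrm{D}\bigl(p_x \,|\, \bigotimes_k\mu_k\bigr) + \kappa$, where $\kappa := \sum_k\sum_{x_k\in\cX_k}\mu_k(x_k)\log\mu_k(x_k)$ depends only on the marginals. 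Hence the objective of \eqref{eq:lifted_eot} at $p$ equals $\sum_\bx C(\bx)p_x(\bx) + \mathrm{D}\bigl(p_x \,|\, \bigotimes_k\mu_k\bigr)$ --- i.e.\ the objective of \eqref{eq:eot_problem} at $p_x$ --- plus $\sum_\bx p_x(\bx)\,\mathrm{D}(q_\bx\,|\,\rho_\bx)$ plus the constant $\kappa - c_0$.

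\textbf{Conclusion.} The middle term is $\ge 0$, is independent of the position marginal and of the constraints, and vanishes iff $q_\bx=\rho_\bx$ for $p_x$-a.e.\ $\bx$. Thus any optimizer $p^\star$ of \eqref{eq:lifted_eot} must satisfy $q_\bx = \rho_\bx$ (else replacing its conditional by $\rho_\bx$ strictly decreases the objective while preserving feasibility), i.e.\ $p^\star$ is the lift of $p^\star_x$; comparing values then shows $p^\star_x$ is an optimizer of \eqref{eq:eot_problem}. Conversely, the lift of any optimizer of \eqref{eq:eot_problem} attains the common optimal value and hence optimizes \eqref{eq:lifted_eot}. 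The lifting map and the ``take position marginal'' map are mutually inverse on the sets of optimizers, which gives the claimed one-to-one correspondence; existence and uniqueness are inherited from \eqref{eq:eot_problem} (strict convexity of the relative entropy, cf.\ \cref{lem:dyn_to_stat}).

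\textbf{Main obstacle.} The only genuine care is the bookkeeping for the mixed discrete--continuous measures: $p$ is atomic in $\bx$ but absolutely continuous in $\by$, whereas $\tilde R_{[K]}$ is fully absolutely continuous, so the quantity $\int p\log(p/\tilde r)$ and the disintegration $\tilde r = e^{-C(\bx)}\rho_\bx$ must be set up properly --- this uses $\tilde\Sigma\succ 0$, which makes $R_{[K]}$ a nondegenerate Gaussian with positive density and well-defined Gaussian conditionals --- together with routine Tonelli justifications for exchanging $\sum_\bx$ with $\int d\by$. Everything else reduces to elementary Gaussian identities.
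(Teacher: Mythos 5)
Your proof is correct and follows essentially the same route as the paper's: both decompose the relative-entropy objective via the chain rule into the position-marginal term (recovering the objective of \eqref{eq:eot_problem}) plus a nonnegative conditional Kullback--Leibler term $\sum_{\bx} p_x(\bx)\,\mathrm{D}(q_\bx \,|\, \rho_\bx)$ that vanishes exactly when the conditional law of $\by$ given $\bx$ matches the Gaussian conditional of the reference, yielding the bijection $P_{[K]}^* \mapsto P_{[K]}^*(\bx)\,r(\by\mid\bx)$. The only differences are cosmetic (you work directly with the minimization form and name the lifting map explicitly, while the paper phrases everything as a maximization), so no further comparison is needed.
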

We refer to~\eqref{eq:lifted_eot} as the \emph{lifted} Schr\"odinger Bridge problem, since it is obtained by lifting the optimization problem from densities on $\cX_{[K]}$ to densities on $\cZ_{[K]}$.

At first sight, the lifted problem~\eqref{eq:lifted_eot} is no improvement over~\eqref{eq:eot_problem}---indeed, the situation seems to have become worse due to the introduction of the continuous variables $\by$.
However, we now show that unlike~\eqref{eq:eot_problem},  problem~\eqref{eq:lifted_eot} is directly amenable to efficient algorithms.

The first step is to leverage the Gauss--Markov property of $\tilde{R}_{[K]}$ to simplify~\eqref{eq:lifted_eot}.
Recall the fundamental fact that, under $\tilde R$, the law of $\bdeta$ has the Markov property.
Given $\bz = (\bx, \by) \in \cZ_{[K]}$ and $k \in [K]$, we write $\bz_k = (x_k, \by_k)$ for the $k$th coordinate of $\bz$, which takes values in $\cZ_{k}$.
The Markov property then implies that $\bz_{k+1}$ is independent of $\bz_0, \dots, \bz_{k-1}$, conditioned on $\bz_k$.
At the level of densities, this implies the decomposition
\begin{equation}
	\begin{aligned}\label{eq:r_fac}
			 r(\bz) & =
		r(\bz_0) \prod_{k=1}^{K} r(\mathbf{z}_{k} | \mathbf{z}_{k-1}) \\
		& \propto r(\bz_0) \prod_{k=1}^K \exp\left(- \frac 12 Q_k(\bz_{k-1}, \bz_k)\right)\,,
		%	 
		%	 
		%	 \exp\left(-\frac{1}{2} \mathbf{z}_{0}^K \tilde \Sigma_{0, 0}^{-1} \mathbf{z}_0\right)  \cdot \\
		%	\prod_{k=0}^{K-1} \exp\left(-\frac{1}{2} (\mathbf{z}_{k+1} - A_k \mathbf{z}_k)^K \Lambda_k^{-1} (\mathbf{z}_{k+1} - A_k \mathbf{z}_k) \right),
	\end{aligned}
\end{equation}
where $r(\bz_0) \propto \exp(- \tfrac 12 \bz_0^\top \tilde \Sigma_{0, 0}^{-1} \bz_0)$ and each $Q_k$ is a quadratic function:
\begin{align*}
%Q_1(\mathbf{z}_{0}, \bz_1) &:= \bz_0^\top \tilde \Sigma_{0, 0}^{-1} \bz_0 + (\mathbf{z}_{1}{-}A_0 \mathbf{z}_0)^T \Lambda_0^{-1} (\mathbf{z}_{1}{-}A_0 \mathbf{z}_0) \nonumber\\
Q_k(\mathbf{z}_{k-1}, \bz_{k}) &:= (\mathbf{z}_{k}{-}A_k \mathbf{z}_{k-1})^T \Lambda_k^{-1} (\mathbf{z}_{k}{-}A_k \mathbf{z}_{k-1})\,, 
\end{align*}
for matrices
$A_k := \tilde{\Sigma}_{k,k-1} (\tilde{\Sigma}_{k-1, k-1})^{-1} \in \bR^{m \times m}$ and $\Lambda_k := \tilde{\Sigma}_{k, k} - \tilde{\Sigma}_{k, k-1} (\tilde{\Sigma}_{k-1, k-1})^{-1} \tilde{\Sigma}_{k-1, k} \in \bR^{m \times m}$.
These representations follow directly from standard formulas for Gaussian conditioning, and admit explicit expressions in terms of the kernel $k$ of the GAP~\citep[Section 2.3.1]{Saa12}.

This decomposition represents the first term in~\eqref{eq:lifted_eot} as a ``tree-structured cost''~\cite{HaaRinChe21}, therefore rendering the marginalization in~\eqref{eq: augmented_sinkhorn_calculate_marginal} amenable to belief propagation methods.
We develop such a method in the next section.

\begin{figure}
    \centering
\begin{tikzpicture}[
	scale=0.6,
	every node/.style={scale=0.6},
	circlenode/.style={circle, draw, minimum size=0.8cm, inner sep=0pt},
	squarenode/.style={rectangle, draw, minimum size=0.8cm, inner sep=0pt}
	]
	
	% Define the main nodes
	\node[circlenode] (eta_k1) {$\eta_{k-1}$};
	\node[squarenode] (alpha_k1) [below=0.7cm of eta_k1] {$\alpha_{k-1}$};
	\node[circlenode] (omega_k1) [below=0.7cm of alpha_k1] {$\omega_{k-1}$};
	
	\node[squarenode] (alpha_k1k) [right=1.5cm of eta_k1] {$\alpha_{k-1,k}$};
	
	\node[circlenode] (eta_k) [right=1.5cm of alpha_k1k] {$\eta_k$};
	\node[squarenode] (alpha_k) [below=0.7cm of eta_k] {$\alpha_k$};
	\node[circlenode] (omega_k) [below=0.7cm of alpha_k] {$\omega_k$};
	
	% Add the arrows between vertical nodes (eta_k1 <-> alpha_k1)
%	\draw[->, shorten >=2pt, shorten <=2pt] (eta_k1) -- node[left] {$m_{\alpha_{k-1}\to\eta_{k-1}}$} (alpha_k1);
	\draw[->, shorten >=2pt, shorten <=2pt] (alpha_k1) -- node[left] {$\upmesskminus$} (eta_k1);
	
	% Add the arrows between vertical nodes (alpha_k1 <-> omega_k1)
	\draw[->, shorten >=2pt, shorten <=2pt] (alpha_k1) -- node[left] {$\downmesskminus$} (omega_k1);
%	\draw[->, shorten >=2pt, shorten <=2pt] (omega_k1) -- node[right] {$n_{\omega_{k-1}\to\alpha_{k-1}}$} (alpha_k1);
	
	% Add the arrows between eta_k1 and alpha_k1k
%	\draw[->, shorten >=2pt, shorten <=2pt] (eta_k1) -- node[above] {$n_{\eta_{k-1}\to\alpha_{k-1,k}}$} (alpha_k1k);
	\draw[->, shorten >=2pt, shorten <=2pt] (alpha_k1k) -- node[above] {$\leftmesskminus$} (eta_k1);
	
	% Add the arrows between alpha_k1k and eta_k
	\draw[->, shorten >=2pt, shorten <=2pt] (alpha_k1k) -- node[above] {$\rightmessk$} (eta_k);
%	\draw[->, shorten >=2pt, shorten <=2pt] (eta_k) -- node[below] {$n_{\eta_k\to\alpha_{k-1,k}}$} (alpha_k1k);
	
	% Add the arrows between vertical nodes (eta_k <-> alpha_k)
%	\draw[->, shorten >=2pt, shorten <=2pt] (eta_k) -- node[left] {$m_{\alpha_k\to\eta_k}$} (alpha_k);
	\draw[->, shorten >=2pt, shorten <=2pt] (alpha_k) -- node[right] {$\upmessk$} (eta_k);
	
	% Add the arrows between vertical nodes (alpha_k <-> omega_k)
	\draw[->, shorten >=2pt, shorten <=2pt] (alpha_k) -- node[right] {$\downmessk$} (omega_k);
%	\draw[->, shorten >=2pt, shorten <=2pt] (omega_k) -- node[right] {$n_{\omega_k\to\alpha_k}$} (alpha_k);
	
	% Add the dots on both sides
	\node[left=0.3cm of eta_k1] {$\cdots$};
	\node[right=0.3cm of eta_k] {$\cdots$};
	
	% Add horizontal arrows to dots
	\draw[<-,  shorten <=2pt] ($(eta_k1.west)+(0,0)$) -- ++(-0.4,0);
	\draw[<-, shorten <=2pt] ($(eta_k.east)+(0,0)$) -- ++(0.4,0);
	
\end{tikzpicture}
     \label{fig: HMM}
     \caption{A portion of the graphical model corresponding to the joint law of $\bomega$ and $\bdeta$.}
\end{figure}
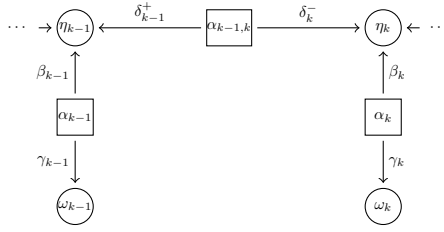

\section{Belief Propagation} \label{Sec: BP-continuous}
The goal of this section is to show that a \textit{belief propagation} algorithm can be used to efficiently solve the lifted problem~\eqref{eq:lifted_eot}.
Belief propagation~\cite{KscFreLoe01,Pea82} is a canonical approach for performing inference in high-dimensional models whose densities possess simple factorizations, such as the one given in~\eqref{eq:r_fac}.
The use of belief propagation algorithms for Sinkhorn-type problems is not new~\cite{HaaRinChe21,SinZhaChe22,TehWel01}; however, we stress that their application to Schr\"odinger bridges with smooth priors is novel.

To develop our algorithm, we first reformulate~\eqref{eq: augmented_sinkhorn_calculate_marginal} in the language of graphical models.
\citep[See][for background.]{KolFri09,WaiJor07}
The probabilistic structure of  $\bomega$ and the lifted process $\bdeta$ under $\tilde R$ means that we can represent their joint distribution by a simple hidden Markov model (see Figure \ref{fig: HMM}), with observed variables $\{ \omega_{k} \}_{k = 0}^{K}$ and hidden variables $\{ \eta_{k} \}_{k=0}^{K}$.
These correspond to \textit{variable nodes} in \cref{fig: HMM}, denoted by circles.

The dependencies among these variables nodes are represented by \textit{factor nodes} (squares in Figure~\ref{fig: HMM}): the factor nodes $\alpha_{k-1, k}$ connecting $\eta_{k-1}$ and $\eta_{{k}}$ enforce the joint law of the pair $(\eta_{k}, \eta_{{k+1}})$, and the factor nodes $\alpha_{k}$ connecting $\eta_{k}$ and $\omega_{k}$ enforce the deterministic requirement that the first coordinate of $\eta_{k}$ equals $\omega_{k}$.
We associate to $\alpha_{k-1, k}$ the \textit{factor node potential} $\Phi_{k}: \cZ_{k-1} \times \cZ_k$ given by
\begin{align*}
	\Phi_1(\mathbf{z}_0, \mathbf{z}_1) & :=  \exp \left( -\frac{\bz_0^\top \tilde \Sigma_{0, 0}^{-1} \bz_0 + Q_{1}(\mathbf{z}_{0}, \mathbf{z}_{1})}{2} \right) \\
    \Phi_{{k}}(\mathbf{z}_{k-1}, \mathbf{z}_{k})  & := \exp \left( -\frac{Q_{k}(\mathbf{z}_{k-1}, \mathbf{z}_{k})}{2} \right) \quad k > 1,
%    \Psi_{k}(o_k, x_k, \mathbf{y}_k) & := \bone_{o_k}(x_k).
\end{align*} 
so that
%In terms of these potentials, the joint density of the pair $(\bomega,\bdeta)$ on $\cX_{[K]} \times \cZ_{[K]}$ under $\tilde{R}$ is
\begin{equation*}
	r(\bz) \propto \prod_{k=1}^{K} \Phi_{k}(\bz_{k-1}, \bz_{{k}}) \,.
\end{equation*}

To find the optimal solution $p^*$ to~\eqref{eq:lifted_eot}, we use a belief propagation algorithm (\cref{alg: BP-continuous}).
The algorithm iteratively updates ``messages'' traveling between factor and variable nodes, which are depicted in Figure~\ref{fig: HMM}.
The ``vertical'' messages $\upmessk$, $\downmessk$ are real-valued functions on $\cX_k$ which encode information about the $\bomega$ marginals of $p^*$, while the ``horizontal'' messages $\leftmessk$, $\rightmessk$ are real-valued functions on $\cZ_k$ which encode information about the joint distribution of $\eta_k$ and its neighbors $\eta_{k-1}$ and $\eta_{k+1}$
These messages are iteratively updated back and forth across the graph via the application of the following operators:

\begin{equation}\label{eq:operators}
\begin{aligned}
%		\mathcal{L}(n)(\mathbf{z}_k) & := \mathcal{L}(n)(x_k, \mathbf{y}_k) = n(x_k) \\
\updownkop(\rightmessk, \leftmessk)(x_k) :&= \int_{\cY_k} \rightmessk(x_k, \by_k) \leftmessk(x_k, \by_k) d \by_k\\
\leftkop(\leftmessk,\upmessk)(\bz_{k-1}) :&= \sum_{x_{k} \in \cX_{k}} \int_{\cY_{k}} \Phi_{k}(\bz_{k-1}, \bz_{k}) \leftmessk(\bz_k) \upmessk(x_k) d \by_{k}\\
\rightkminusop(\rightmesskminus,\upmesskminus)(\bz_{k}) :&= \sum_{x_{k-1} \in \cX_{k-1}} \int_{\cY_{k-1}} \Phi_{k}(\bz_{k-1}, \bz_{k}) \rightmessk(\bz_{k-1}) \upmessk(x_{k-1}) d \by_{k-1} 
%		\mathcal{I}_{\Phi_{k}}^{(1)}(n)(\mathbf{z}_{k+1})& := \sum_{x_{k}} \int \Phi_{k}(\mathbf{z}_k, \mathbf{z}_{k+1}) n(\mathbf{z}_k) d\mathbf{y}_k, \\
%		\mathcal{I}_{\Phi_{k}}^{(2)}(n)(\mathbf{z}_{k}) &:= \sum_{x_{k+1}} \int \Phi_{k}(\mathbf{z}_k, \mathbf{z}_{k+1}) n(\mathbf{z}_{k+1}) d\mathbf{y}_{k+1}, \\
%		\mathcal{I}_{\Psi}(n)(x_k) &:= \int n(x_k, \mathbf{y}_k) d\mathbf{y}_k.
\end{aligned}
\end{equation}

Since these operators involve manipulating functions on the space $\cZ_k$, they are not directly implementable.
In this section, we regard these operators as single basic operators for the purpose of complexity analysis.
We develop efficient techniques to bypass their direct evaluation in the next section.

The main result of this section is that Algorithm~\ref{alg: BP-continuous} implicitly implements the Sinkhorn algorithm (Algorithm~\ref{alg: Multi_Sinkhorn}) for the multi-marginal entropic optimal transport problem~\eqref{eq:eot_problem}, with cost given by
\begin{equation}
	\exp(-C(\bx)) = \idotsint \prod_{k=1}^{K} {\Phi}_{k}(\bz_{k-1}, \bz_{k}) d\by_0 \cdots d \by_K\,. \label{eq:bp_to_sink_connection}
\end{equation}
This connection allows us to develop a rigorous convergence guarantee for Algorithm~\ref{alg: BP-continuous}.
Indeed, \eqref{eq:bp_to_sink_connection} implies that $\exp(-C(\bx))$ is, up to a normalizing constant, the joint density of $R_{[K]}$, so that Algorithm~\ref{alg: Multi_Sinkhorn}, and hence Algorithm~\ref{alg: BP-continuous}, solves the smooth Schrödinger Bridge problem.

\begin{algorithm}[h]
\begin{algorithmic}[1]
\STATE \textbf{Input:} Factor node potentials $\{ \Phi_k \}_{k=0}^{K-1}$, distributions $\{\mu_k\}_{k=0}^{K}$ on $\{ \mathcal{X}_{k} \}_{k = 0}^{K}$.

\STATE Initialize $\ell = 0$, $\leftmessend \equiv 1$, $\rightmessend \equiv 1$, and $\upmessk^{(0)} : \cX_k \to \bR^+$, $k \in [K]$ arbitrary
\WHILE{not converged}
\STATE $\ell \gets \ell + 1$

\FOR[\textit{/* left pass */}]{$k = K, ..., 1$}
\STATE $\leftmesskminus \gets \leftkop(\leftmessk, \upmessk^{(\ell-1)})$
\ENDFOR
\vspace{2mm}
\STATE $\downmesszero^{(\ell)} \gets \updownzeroop(\rightmessend, \leftmesszero)$
\STATE $\upmesszero^{(\ell)} \leftarrow \mu_0 \oslash \downmesszero^{(\ell)} $

\vspace{2mm}
\FOR[\textit{/* right pass */}]{$k = 1, ..., K$}
\STATE $\rightmessk \gets \rightkminusop(\rightmesskminus, \upmesskminus^{(\ell)})$
\STATE $\downmessk^{(\ell)} \gets \updownkop(\rightmessk, \leftmessk)$
\STATE $\upmessk^{(\ell)} \leftarrow \mu_k \oslash \downmessk^{(\ell)} $
\ENDFOR
\vspace{2mm}
\ENDWHILE
\STATE Return $(\upmessk^{(\ell)})_{k \in [K]}$
\end{algorithmic}
\caption{Belief Propagation with Continuous Massages} \label{alg: BP-continuous}
\end{algorithm}

\begin{theorem} \label{thm: poly_time_epsilon_approx}
    Algorithm \ref{alg: BP-continuous} is equivalent to Algorithm~\ref{alg: Multi_Sinkhorn} with cost given by~\eqref{eq:bp_to_sink_connection}, in the sense that,
    if the initialization of Algorithm \ref{alg: BP-continuous} and Algorithm \ref{alg: Multi_Sinkhorn} satisfy
\begin{align}
    \upmessk^{(0)} = v_k^{(0)} \quad \forall k \in [K]
\end{align}
then, for all $\ell \ge 0$ and all $k = 0, ..., K$, we have
\begin{align}
    \downmessk^{(\ell)} &= \mathbf{S}^{(\ell)}_k,
    \upmessk^{(\ell)} = v^{(\ell)}_k.
\end{align} 
In particular, we have the following two consequences:

(1) Algorithm \ref{alg: BP-continuous} achieves an $\epsilon$-approximate solution for \eqref{eq:eot_problem} in $\tilde O(K C_{\max} / \epsilon^{-1})$ iterations, where $C_{\max} = \max_{\bx \in \cX_{[T]}} C(\bx)$ and $\tilde O(\cdot)$ hides polylogarithmic factors in the problem parameters.

(2) The solution of \eqref{eq:lifted_eot} is given by 
\begin{equation}
    \begin{aligned}
        p(\bz) \propto \prod_{k = 1}^{K} \Phi_{k}(\mathbf{z}_{k-1}, \mathbf{z}_{k}) \prod_{k = 0}^{K} \upmessk^* (x_k) \label{eq: full_decomposition_p^*}
    \end{aligned}
\end{equation}
where $(\upmessk^*)_{k \in [K]}$ is the fixed point of Algorithm \eqref{alg: BP-continuous}.

\end{theorem}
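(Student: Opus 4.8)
The plan is to reduce everything to the claimed term-by-term equivalence of the two algorithms, and then read off both consequences from it. I would prove the equivalence by a double induction: an outer induction on the sweep index $\ell$, and, within a single sweep, an induction along the chain $k = 0, 1, \dots, K$. The organizing observation is that the intermediate belief-propagation messages are exactly the \emph{partial marginalizations} of the tree-structured cost~\eqref{eq:r_fac} that arise while computing one Sinkhorn update; once this is recognized the rest is bookkeeping. Concretely, fixing $\ell \ge 1$ and assuming inductively that $\upmessk^{(\ell-1)} = v_k^{(\ell-1)}$ for all $k$, I would first unroll the two passes. Starting from $\leftmessend \equiv 1$ and applying $\leftkop$ down the chain, the left pass produces the closed form
\begin{equation*}
	\leftmessk(\bz_k) = \sum_{x_{k+1} \in \cX_{k+1}} \cdots \sum_{x_K \in \cX_K} \int_{\cY_{k+1} \times \cdots \times \cY_K} \prod_{j = k+1}^{K} \Phi_j(\bz_{j-1}, \bz_j) \prod_{j = k+1}^{K} \beta_j^{(\ell-1)}(x_j) \, d\by_{k+1} \cdots d\by_K ,
\end{equation*}
the ``right marginalization'' of the chain that uses only the old up-messages; symmetrically, starting from $\rightmessend \equiv 1$ and applying $\rightkminusop$ up the chain, the right pass produces
\begin{equation*}
	\rightmessk(\bz_k) = \sum_{x_0 \in \cX_0} \cdots \sum_{x_{k-1} \in \cX_{k-1}} \int_{\cY_0 \times \cdots \times \cY_{k-1}} \prod_{j = 1}^{k} \Phi_j(\bz_{j-1}, \bz_j) \prod_{j = 0}^{k-1} \beta_j^{(\ell)}(x_j) \, d\by_0 \cdots d\by_{k-1} ,
\end{equation*}
which uses the freshly updated up-messages for indices below $k$. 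Each of these is a short induction off the definitions in~\eqref{eq:operators}, Tonelli's theorem licensing the interchange of the (finite) sums with the Gaussian integrals; everything in sight is finite because $\prod_{k=1}^{K}\Phi_k$ is proportional to the density of the non-degenerate Gaussian $\tilde R_{[K]}$.

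Combining the two identities is then immediate. For $k \ge 1$, the operator $\updownkop$ integrates out $\by_k$; substituting the two closed forms, pulling the up-messages (which depend only on $\bx$) outside the $\by$-integrals, and invoking~\eqref{eq:bp_to_sink_connection} gives
\begin{equation*}
	\downmessk^{(\ell)}(x_k) = \sum_{\substack{x_j \in \cX_j \\ j \neq k}} \exp(-C(\bx)) \prod_{j < k} \beta_j^{(\ell)}(x_j) \prod_{j > k} \beta_j^{(\ell-1)}(x_j) ,
\end{equation*}
and by the two inductive hypotheses the up-messages here are $v_j^{(\ell)}$ for $j < k$ and $v_j^{(\ell-1)}$ for $j > k$, so the right-hand side is literally the Sinkhorn marginalization $\mathbf{S}_k^{(\ell)}$ from~\eqref{eq: augmented_sinkhorn_calculate_marginal}. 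Hence $\downmessk^{(\ell)} = \mathbf{S}_k^{(\ell)}$, and then $\upmessk^{(\ell)} = \mu_k \oslash \downmessk^{(\ell)} = \mu_k \oslash \mathbf{S}_k^{(\ell)} = v_k^{(\ell)}$, which closes the inner induction; the $k = 0$ case is the same computation with $\rightmessend \equiv 1$ and an empty left product, carried out in the preamble of \cref{alg: BP-continuous}. This proves the equivalence, and hence that \cref{alg: BP-continuous} solves the smooth SB problem.

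The consequences follow quickly. For (1): by~\eqref{eq:bp_to_sink_connection}, $\exp(-C(\bx))$ is proportional to $R_{[K]}(\bx)$, the joint density of the GAP at $t_0, \dots, t_K$, which is a strictly positive real number at each of the finitely many points of $\cX_{[K]}$; hence $C$ is real-valued, $C_{\max} < \infty$, and running \cref{alg: Multi_Sinkhorn} with this cost solves~\eqref{eq:eot_problem}, so it suffices to quote the standard worst-case iteration bound for multimarginal Sinkhorn, which the equivalence transfers verbatim to \cref{alg: BP-continuous}. For (2): the Sinkhorn iterates $v_k^{(\ell)}$ converge to a fixed point $v_k^*$, so the common fixed point of \cref{alg: BP-continuous} satisfies $\upmessk^* = v_k^*$ (up to the global rescaling ambiguity of Sinkhorn fixed points, immaterial for the product below); then $p^*(\bz) \propto \prod_{k=1}^{K}\Phi_k(\bz_{k-1}, \bz_k)\prod_{k=0}^{K}\upmessk^*(x_k)$ is feasible for~\eqref{eq:lifted_eot} --- since $\prod_{k=1}^{K}\Phi_k \propto \exp(-\tfrac12 \bz^\top \tilde\Sigma^{-1}\bz)$ by~\eqref{eq:r_fac}, its $x_k$-marginal is proportional to $\exp(-C)\prod_k v_k^*$ by~\eqref{eq:bp_to_sink_connection}, i.e.\ to the optimizer of~\eqref{eq:eot_problem}, hence equal to $\mu_k$ --- and, writing $F$ for the objective of~\eqref{eq:lifted_eot}, the standard rearrangement of the entropic functional gives $F(p) - F(p^*) = \mathrm{D}(p \mid p^*) \ge 0$ for every feasible density $p$, the remaining linear term collapsing to a constant because it sees $p$ only through its prescribed $x_k$-marginals. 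So $p^*$ is the unique minimizer, which is~\eqref{eq: full_decomposition_p^*}.

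\textbf{The main obstacle.} Nothing here is conceptually deep, but the step demanding real care is the identification of the intermediate messages with the partial Sinkhorn marginalizations --- in particular keeping straight which up-messages still carry superscript $(\ell-1)$ versus $(\ell)$, since this encodes exactly the Gauss--Seidel sweep order built into \cref{alg: Multi_Sinkhorn}, and a single off-by-one there would break the induction. The analytic points --- finiteness of the Gaussian integrals, validity of Fubini/Tonelli, strict positivity of the scalings --- are routine given that $\tilde\Sigma$ is non-degenerate and each support $\cX_k$ is finite.
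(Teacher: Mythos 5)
Your proposal is correct and follows essentially the same route as the paper: a double induction that identifies the left- and right-pass messages with the partial marginalizations of the chain (keeping the $(\ell)$ vs.\ $(\ell-1)$ superscripts straight exactly as the Gauss--Seidel order requires), then combines them via $\updownkop$ to recover $\mathbf{S}_k^{(\ell)}$, with consequence (1) quoted from the standard multimarginal Sinkhorn iteration bounds. The only cosmetic difference is in consequence (2), where you verify optimality of the product form directly via the Kullback--Leibler Pythagorean rearrangement, whereas the paper simply invokes the correspondence already established in \cref{lem:eot_to_lifted} together with the product form of the Sinkhorn optimizer; the two arguments are interchangeable.
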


The representation in~\eqref{eq: full_decomposition_p^*} implies that the output of \cref{alg: BP-continuous} can be used to efficiently manipulate and sample from the solution to the smooth SB problem; see \cref{sec:sampling} for details.

\section{Approximate Belief Propagation}\label{sec:approx_bp}
The final ingredient of our algorithm is an approximation scheme to efficiently implement the operators in~\eqref{eq:operators}.
We use the technique proposed by~\cite{NooWai13}: we decompose the continuous messages in a suitable orthonormal basis.
The orthonormal decomposition method provides two key benefits: First, by truncating the series at a sufficiently high order, we can create an accurate representation of the messages using only a finite number of orthogonal coefficients. Second, the key update rules in Algorithm \ref{alg: BP-continuous} described in~\eqref{eq:operators} involve taking the $L^2$ inner product of two continuous messages.
Expressing the messages in an orthonormal basis simplifies these operations considerably.

Let $\{ \phi_k^i \}_{i=0}^{\infty}$ represent an orthonormal basis in the space $L^2(\mathbb{R}^{m-1})$.
The subscript $k$ indicates that one is free to choose different bases for different $k \in [K]$.

We express the horizontal messages $\leftmessk$ and $\rightmessk$ in terms of this basis as follows:
\begin{align}
    \leftmessk(\mathbf{z}_k) & = \sum_{i=1}^{\infty} \ell_{k}^{i}(x_k) \phi_k^{i}(\mathbf{y}_k), \label{approx: left}\\
    \rightmessk(\mathbf{z}_{k}) & = \sum_{i=1}^{\infty} r_{k}^{i} (x_{k}) \phi_{k}^{i}(\mathbf{y}_{k}), \label{approx: right}
\end{align}
where $r$ and $\ell$ denote the coefficients for rightward and leftward messages and the coefficients are given by 
\begin{align*}
    \ell^i_k(x_k) &:= \int \leftmessk(x_k, \mathbf{y}_k) \phi_k^i(\mathbf{y}_k) d\mathbf{y}_k \\
    r^i_{k}(x_{k}) &:= \int \rightmessk(x_{k}, \mathbf{y}_{k}) \phi_{k}^i(\mathbf{y}_{k}) d\mathbf{y}_{k},
\end{align*}
Utilizing the orthonormal expansions from \eqref{approx: left}-\eqref{approx: right}, we can re-write Algorithm \ref{alg: BP-continuous} so that it operates directly on the coefficients $\bl_k := (\ell_k^i)_{i=0}^\infty$ and $\br_k:= (r_k^i)_{i=0}^\infty$, which are functions from $\cX_k$ to $\ell_2$.

First, by $L^2(\cY_k)$ orthogonality, the update rule for $\downmessk$ can be written
\begin{align*}
	\updownkop(\leftmessk, \rightmessk)(x_k) %& = \int_{Y_k} \left(\sum_{i=1}^{\infty} \ell_{k}^{i}(x_k) \phi_k^{i}(\mathbf{y}_k)\right)\left(\sum_{i=1}^{\infty} r_{k}^{i}(x_k) \phi_k^{i}(\mathbf{y}_k)\right) d \by_{k} \\
	& = \langle \bl(x_k), \br(x_k) \rangle.
\end{align*}
Note that in this notation, we can write
\begin{equation*}
	\upmessk(x_k) = \mu_k(x_k)/\downmessk(x_k) = \mu_k(x_k) \langle \bl(x_k), \br(x_k) \rangle^{-1}
\end{equation*}

We now consider the update rule for the left and right messages.
The following lemma shows how to express these updates in terms of coefficients.
\begin{lemma}\label{lem:messages}
	Fix functions $\leftmessk = \sum_{i=1}^{\infty} \ell_{k}^{i} \otimes \phi_k^{i}$ and $\upmessk$.
	Then the coefficients of $\leftmesskminus = \leftkop(\leftmessk, \upmessk)$ in the basis $(\phi_{k-1}^i)_{i=1}^\infty$ are given by
	\begin{equation}
		\ell_{k-1}^i(x_{k-1}) = \sum_{x_k \in \cX_k} \upmessk(x_k) \sum_{j=1}^\infty \ell_k^j(x_k) 	\Gamma^{ij}_{k-1}(x_{k-1}, x_{k})\,,
	\end{equation}
	where $\Gamma^{ij}_{k-1}(x_{k-1}, x_{k})$ denotes
	\begin{equation}
		\iint \phi_{k-1}^i(\mathbf{y}_{k-1}) \phi_{k}^j(\mathbf{y}_{k}) \Phi_{k}(\mathbf{z}_{k-1}, \mathbf{z}_{k}) d\mathbf{y}_{k-1} d\mathbf{y}_{k}\,. \label{eq: Precompute_formula}
	\end{equation}
	Similarly, if $\rightmesskminus = \sum_{i=1}^{\infty} r_{k-1}^{i} \otimes \phi_{k-1}^{i}$, then the coefficients of $\rightmessk = \rightkminusop(\rightmesskminus, \upmesskminus)$ in the basis $(\phi_{k}^i)_{i=1}^\infty$ are given by
	\begin{equation}
		r_{k}^i(x_{k}) = \sum_{x_{k-1} \in \cX_{k-1}} \upmesskminus(x_{k-1}) \sum_{j=1}^\infty r_{k-1}^j(x_{k-1}) 	\Gamma^{ij}_{k-1}(x_{k-1}, x_{k})\,.
	\end{equation}
\end{lemma}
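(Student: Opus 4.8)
The plan is to verify Lemma~\ref{lem:messages} by direct substitution of the basis expansions into the definitions of the operators $\leftkop$ and $\rightkminusop$ from~\eqref{eq:operators}, and then to read off the coefficients using orthonormality of the bases. This is essentially a bookkeeping computation: the content of the lemma is that the continuous integrals over $\cY_k$ collapse into the precomputable matrices $\Gamma^{ij}_{k-1}$ defined in~\eqref{eq: Precompute_formula}, leaving a finite (or $\ell_2$) linear-algebraic update on the coefficients.

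First I would treat the leftward update. Starting from the definition
\[
\leftkop(\leftmessk,\upmessk)(\bz_{k-1}) = \sum_{x_k \in \cX_k} \int_{\cY_k} \Phi_k(\bz_{k-1},\bz_k)\, \leftmessk(\bz_k)\, \upmessk(x_k)\, d\by_k\,,
\]
I would substitute $\leftmessk(\bz_k) = \sum_{j} \ell_k^j(x_k)\phi_k^j(\by_k)$, interchange the (absolutely convergent) sum and integral, and obtain $\leftmesskminus(\bz_{k-1}) = \sum_{x_k}\upmessk(x_k)\sum_j \ell_k^j(x_k)\int_{\cY_k}\Phi_k(\bz_{k-1},\bz_k)\phi_k^j(\by_k)\,d\by_k$. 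The remaining $\by_k$-integral is a function of $\bz_{k-1} = (x_{k-1},\by_{k-1})$; to extract its $i$th coefficient in the basis $(\phi_{k-1}^i)$ I integrate against $\phi_{k-1}^i(\by_{k-1})$ over $\cY_{k-1}$, which by Fubini produces exactly $\Gamma^{ij}_{k-1}(x_{k-1},x_k)$ as in~\eqref{eq: Precompute_formula}. Since $\ell_{k-1}^i(x_{k-1}) = \int \leftmesskminus(x_{k-1},\by_{k-1})\phi_{k-1}^i(\by_{k-1})\,d\by_{k-1}$ by definition of the coefficients, this yields the claimed formula. The rightward update is handled identically, swapping the roles of $\bz_{k-1}$ and $\bz_k$ and using the definition of $\rightkminusop$; note that $\Phi_k$ is the same factor potential in both, so the same matrix $\Gamma^{ij}_{k-1}$ appears (this symmetry is worth remarking on explicitly).

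The only genuinely non-routine point is justifying the interchange of the infinite sum over $j$ with the integrals over $\cY_k$ and $\cY_{k-1}$. I would address this by noting that the messages in question are (nonnegative) $L^2$ functions in $\by$ for each fixed $x$, that $\Phi_k$ is a Gaussian kernel and hence bounded and in $L^2$ in each argument, so $\int \Phi_k \phi_k^j d\by_k$ is an $\ell_2$ sequence in $j$ (it is the $j$th coefficient of $\Phi_k(\bz_{k-1},\cdot)$), and then Cauchy--Schwarz in $\ell_2$ together with dominated convergence legitimizes the exchange; alternatively one can simply phrase everything as the statement that $\leftkop$ is a bounded linear operator $L^2(\cY_k)\to L^2(\cY_{k-1})$ (for each pair $(x_{k-1},x_k)$) with kernel $\Phi_k$, whose matrix in the chosen orthonormal bases is precisely $(\Gamma^{ij}_{k-1}(x_{k-1},x_k))_{i,j}$, making the coefficient identity a tautology. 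I expect this integrability/convergence step to be the main obstacle in making the argument fully rigorous, though it is a standard Hilbert-space fact once the Gaussian structure of $\Phi_k$ is invoked; everything else is mechanical substitution and an application of Fubini's theorem.
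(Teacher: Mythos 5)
Your proposal is correct and follows essentially the same route as the paper's proof: expand $\leftmessk$ in the basis, interchange the sum with the integrals, and recognize the resulting double integral against $\phi_{k-1}^i$ and $\phi_k^j$ as $\Gamma^{ij}_{k-1}$, with the rightward case handled symmetrically. The paper performs exactly this substitution (starting from the coefficient definition rather than the operator, which is an immaterial reordering) and does not even comment on the interchange of sum and integral that you take care to justify.
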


To obtain a practical procedure, we replace the infinite sums in \eqref{approx: left}-\eqref{approx: right} with finite approximations.
Choosing a sufficiently large $M$, and using the first $M$ orthonormal functions on the basis to approximate $\leftmessk$ and $\rightmessk$, we repeat the previous steps to derive the update rules, expressed as matrix-vector multiplications.
\begin{gather}
	    \upmessk(x_k) \leftarrow \mu(x_k) (\mathbf{\bl}_{k}(x_k)^T \mathbf{r}_{k}(x_k))^{-1}, \label{VecUpdate: c_k}\\
    \bl_{k-1}(x_{k-1}) {\leftarrow} \sum_{x_{k} \in \cX_k} \upmessk(x_k) \mathbf{\Gamma}_{k-1}(x_{k-1}, x_{k}) \bl_{k}(x_{k}) \label{VecUpdate: left_coef} \\
    \br_{k}(x_{k}) {\leftarrow}  \sum_{x_{k-1} \in \cX_{k-1}} \upmesskminus(x_{k-1}) \br^T_{k-1}(x_{k-1}) \mathbf{\Gamma}_{k-1}(x_{k-1}, x_{k})
     \label{VecUpdate: right_coef}
\end{gather}
where $\bl_{k-1}(x_{k-1})$ and $\br_{k}(x_{k})$ are vectors in $\mathbb{R}^M$ and $\mathbf{\Gamma}_{k-1}(x_{k-1}, x_{k})$ is a matrix in $\mathbb{R}^{M \times M}$ whose element in row $i$ and column $j$ is given by $\Gamma_{k-1}^{ij}(x_{k-1}, x_{k})$. 

\begin{algorithm} \label{alg: final_message_passing}
\caption{Approximate Belief Propagation Algorithm}
\begin{algorithmic} \label{algo: base_approx_mp}
\STATE Precompute $\mathbf{\Gamma}_0, ..., \mathbf{\Gamma}_{K-1} \in \mathbb{R}^{n \times n \times M \times M}$ by \eqref{eq: Precompute_formula}. 
\STATE Initialize $\{ \mathbf{r}_k(x_k) \}_{k, x_k}$ and $\{ \mathbf{l}_k(x_k) \}_{k, x_k}$ as $M$ dimensional vectors filled with 1's
\WHILE{\text{~not converged~}}

\FOR{$k=K,...,1$}
%    \FOR{ $x_{k} \in \mathcal{X}_{k}$}
%    \STATE Calculate $\upmessk(x_k)$ by \eqref{VecUpdate: c_k}
%    \ENDFOR
%    \FOR{$x_{k-1} \in \mathcal{X}_{k-1}$}
    \STATE  \text{Update $\bl_{k-1}$ by \eqref{VecUpdate: left_coef}} 
%    \ENDFOR
\ENDFOR

%\FOR{$x_{0} \in \mathcal{X}_{0}$}
    \STATE Calculate $\upmesszero$ by \eqref{VecUpdate: c_k}
%\ENDFOR

\FOR{$k=1,...,K$}
%    \FOR{$x_{k} \in \mathcal{X}_{k}$}
\STATE  \text{Update $\br_{k}$ by \eqref{VecUpdate: right_coef}} 
    \STATE Calculate $\upmessk$ by \eqref{VecUpdate: c_k}
%    \ENDFOR
\ENDFOR

\ENDWHILE
\STATE Return $\br_k$ and $\bl_k$
\end{algorithmic}
\end{algorithm}

Upon convergence of this algorithm, we obtain the coefficients $\br_{k}$ and $\bl_k$, and thereby obtain estimates of $\upmessk^*$.
As discussed after \cref{thm: poly_time_epsilon_approx}, these messages can be used directly for downstream tasks involving the Schrödinger bridge.

\section{Time complexity and practical considerations}\label{sec:practical}
Implementing Algorithm~\ref{algo: base_approx_mp} requires selecting bases $(\{\phi_k^i\}_{i=1}^\infty)_{k \in [K]}$ along with a number of coefficients $M$.
The computational complexity of the resulting algorithm scales directly with $M$, which we summarize in the following result.
\begin{theorem}
	Executing $T$ iterations of \cref{algo: base_approx_mp} takes $O(T Kn^2 M^2)$ time.
	In particular, executing $T = \tilde O(K C_{\max} \epsilon^{-1})$ iterations takes $\tilde O(K^2 n^2 M^2 C_{\max} \epsilon^{-1})$ time. 
\end{theorem}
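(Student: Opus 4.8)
The statement is essentially a bookkeeping argument: the plan is to bound the cost of the one-time precomputation and the cost of a single iteration of \cref{algo: base_approx_mp}, and then multiply the latter by $T$.

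First I would handle the precomputation. The tensors $\mathbf{\Gamma}_0, \dots, \mathbf{\Gamma}_{K-1}$ together have $Kn^2M^2$ entries, each entry $\Gamma^{ij}_{k-1}(x_{k-1}, x_k)$ being the two-dimensional integral~\eqref{eq: Precompute_formula}. Since $\Phi_k$ is the exponential of a quadratic form in $(\by_{k-1}, \by_k)$ (with $x_{k-1}, x_k$ entering only as fixed shifts) and the $\phi_k^i$ are a fixed orthonormal family, each such integral is a Gaussian moment that can be evaluated in $O(1)$ time, either in closed form---e.g. for a Hermite-type basis adapted to the reference Gaussian---or to the target accuracy by a quadrature rule whose size is independent of the other parameters. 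Hence the precomputation costs $O(Kn^2M^2)$, which is no larger than the bound claimed for $T$ iterations (as $T \ge 1$).

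Next I would bound one iteration, which consists of a left pass over $k = K, \dots, 1$, a single update of $\upmesszero$, and a right pass over $k = 1, \dots, K$. In the left pass, the update~\eqref{VecUpdate: left_coef} computes, for each of the at most $n$ values $x_{k-1} \in \cX_{k-1}$, a sum of at most $n$ terms, each a scalar times a matrix–vector product $\mathbf{\Gamma}_{k-1}(x_{k-1},x_k)\bl_k(x_k)$ of cost $O(M^2)$; this is $O(nM^2)$ per $x_{k-1}$, hence $O(n^2M^2)$ per step $k$, hence $O(Kn^2M^2)$ for the whole pass. The identical count applied to~\eqref{VecUpdate: right_coef} bounds the right pass by $O(Kn^2M^2)$. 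The interleaved $\upmessk$ updates~\eqref{VecUpdate: c_k} each cost $O(M)$ (an inner product of two vectors in $\mathbb{R}^M$), for a total of $O(KnM)$ per iteration, which is dominated. Thus one iteration costs $O(Kn^2M^2)$, $T$ iterations cost $O(TKn^2M^2)$, and together with the precomputation this gives the first claim. For the ``in particular'' statement I would then invoke consequence (1) of \cref{thm: poly_time_epsilon_approx}: running $T = \tilde O(KC_{\max}\epsilon^{-1})$ iterations suffices for an $\epsilon$-approximate solution, and substituting this $T$ into $O(TKn^2M^2)$ yields $\tilde O(K^2n^2M^2 C_{\max}\epsilon^{-1})$.

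The only step that is not pure accounting is the claim that each entry of $\mathbf{\Gamma}$ is computable in constant time; this is where the choice of basis enters, so I would state it carefully and, if necessary, restrict to bases (such as scaled Hermite functions) for which the integrals~\eqref{eq: Precompute_formula} have closed forms. A secondary caveat worth flagging is that the iteration-count bound imported from \cref{thm: poly_time_epsilon_approx} governs the exact recursion of \cref{alg: BP-continuous}; since \cref{algo: base_approx_mp} is that recursion written in the $M$-dimensional coefficient coordinates (up to truncating the orthonormal expansions at order $M$), the iteration count transfers, with the truncation error controlled separately through the choice of $M$. I expect the basis-dependent constant-time evaluation of $\mathbf{\Gamma}$ to be the only genuine obstacle; the rest is elementary operation counting.
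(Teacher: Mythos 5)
Your proof is correct and matches the paper's argument, which is the same elementary operation count: $2K$ message updates per iteration, each dominated by the $O(n^2M^2)$ matrix--vector products in \eqref{VecUpdate: left_coef}--\eqref{VecUpdate: right_coef}, with the $O(nM)$ cost of \eqref{VecUpdate: c_k} absorbed, and the iteration count imported from \cref{thm: poly_time_epsilon_approx}. Your additional care about the one-time $O(Kn^2M^2)$ precomputation of $\boldsymbol{\Gamma}$ and the basis-dependent evaluability of \eqref{eq: Precompute_formula} goes slightly beyond what the paper writes (it omits the precomputation entirely), but does not change the bound.
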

\begin{proof}
For each iteration of Algorithm~\ref{algo: base_approx_mp}, we run $2K$ sub-iterations for updating the left and right messages $\bl_k$ and $\br_k$. The intermediate step~\eqref{VecUpdate: c_k} for the calculation of $\upmessk$ has complexity $O(nM)$. The major update steps~\eqref{VecUpdate: left_coef} and~\eqref{VecUpdate: right_coef} involve matrix-vector multiplication with a matrix of size~$nM \times nM$ , which has complexity $O(n^2M^2)$. Therefore, the total complexity of the algorithm is given by~$O(T Kn^2 M^2)$.
\end{proof}
\Cref{thm: poly_time_epsilon_approx} suggests that $T = \tilde O(K C_{\max} \epsilon^{-1})$ iterations suffice to obtain a $\epsilon$-approximate solution to~\eqref{eq:lifted_eot}; unfortunately, however, we lack a rigorous approximation guarantee quantifying the difference between the output of \cref{algo: base_approx_mp} and that of \cref{alg: BP-continuous}.
Nevertheless, the success of our empirical results (\cref{sec:experiments}) indicates that \cref{algo: base_approx_mp} does offer a good approximation for the solution to the smooth SB problem.
We leave the open question of demonstrating this fact theoretically to future work.

The main tuning parameter of our algorithm is the choice of $M$.
In principle, this choice should depend on the smoothness of the messages $\leftmessk$ and $\rightmessk$, the order $m$ of the GAP, and the dimension $d$.
Since $\leftmessk$ and $\rightmessk$ correspond to Gaussian densities, their expansion in many reasonable bases (for example, Fourier or Wavelet bases) will exhibit strong decay; however, since they are defined on $\cY_k = \bR^{(m-1)d}$, standard smoothness arguments would predict that the necessary number of coefficients scales exponentially with the product $md$.

However, the dependence on $d$ can be somewhat ameliorated under additional assumptions.
Suppose that the GAP has independent coordinates and the orthogonal bases have tensor product structure, so that each basis element $\phi \in L^2(\bR^{(m-1)d})$ satisfies $\phi(\by) = \prod_{j=1}^d \varphi^{(j)}(\by^{(j)})$
for $\varphi^{(j)} \in L^2(\bR^{m-1})$ and where $\by^{(j)}$ corresponds to the derivatives of the $j$th coordinate of $\omega$.
In this case, the tensor $\boldsymbol{\Gamma}$ factors across each dimension into the tensor product of $d$ smaller tensors $\tilde{\Gamma}^{(j)}$,$(j=1,...,d)$. Suppose the number of coefficients we use is equal for each dimension, i.e. $\tilde{\Gamma}^{(j)} \in \mathbb{R}^{n \times n \times M^{\frac{1}{d}} \times M^{\frac{1}{d}}}$, then the complexity of steps~\eqref{VecUpdate: left_coef} and~\eqref{VecUpdate: right_coef} drops to $O(dn^2M^{1+\frac{1}{d}})$ by taking advantage of this lower rank structure.  In this important special case, therefore, the time complexity of our algorithm scales more benignly with $d$, which allows us to take larger $M$ when $d$ rises. A record of the running time of one iteration of message passing against $M $ and dimension $d$ is presented in Table~\ref{tb:runtime}.
\section{Experiments}~\label{sec:experiments}
\noindent We test our algorithm on two kinds of smooth trajectory inference tasks. The first kind aims at tracking the exact trajectory of each individual particle (e.g. Figure~\ref{FIG:Matern_1D}), which we refer to as the One-By-One (OBO) tracking task in the following text. For this kind of task, we evaluate the performance by calculating the distance of each inferred trajectory and the ground truth and measure the percentage of time that the algorithm tracks a particle correctly. For the second kind, the task is to infer the group trajectories of point clouds whose evolution has geometric structures. We provide two ways to evaluate the performance for this kind of task, similar to the evaluation in~\cite{banerjee2024efficient}. We first keep all the observations and visualize the trajectories and see if they form a pattern that is close to the ground-truth pattern. Secondly, we will leave out observations at a certain timestep and instead infer the position of particles at this timestep and evaluate the distance between the inferred and real observations, which we call Leave-One-Timestep-Out (LOT) tasks. Our code for reproducing these experiments is available on~\href{https://github.com/WanliHongC/Smooth_SB}{Github}.
\subsection{One-By-One tracking}
For OBO tasks, we consider three synthetic datasets where trajectories of particles intersect frequently. We compare our algorithm with the standard Schrödinger Bridge (SB) and a modification based on computing the $W_2$ optimal matching at each step (W2M). We test on three data sets, two 2-dimensional data sets (Fig~\ref{FIG:trima} in \cref{sec:details}) and a 3-dimensional data set consisting of the simulated orbits of an $N$-body physical system (Fig.~\ref{FIG:nbody_f1}). Smooth SB performs second-best on the Tri-stable diffusion dataset and substantially outperforms other approaches on the more challenging $N$-Body and Gaussian Process data. Quantitative evaluations are summarized in Table~\ref{tab:obo_quan}. Full experimental details appear in~\cref{sec:details}.

\begin{figure}[t] 

	\centering
	\includegraphics[scale=0.5]{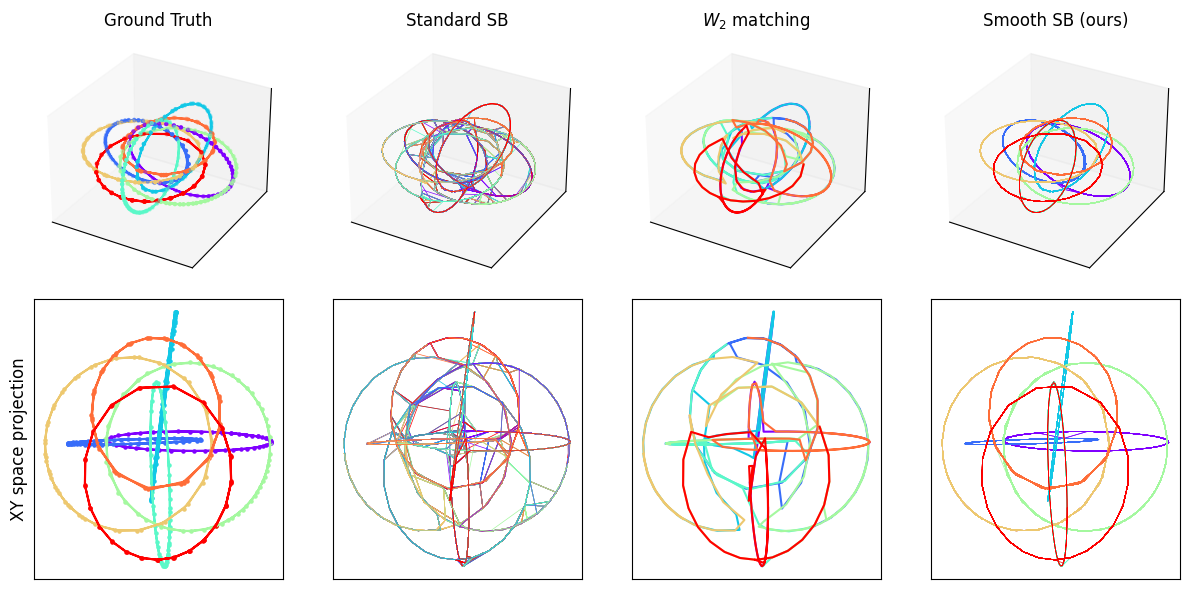}
	
	\caption{Visualization of orbits in 3D space. The colors of trajectories depend on the starting point. The second row is the visualization of the XY-space projection of the corresponding above plot.}
    \label{FIG:nbody_f1}
    \vspace{-3pt}
\end{figure}

\begin{table}
\centering
\caption{A comparison between smooth SB and two baseline particle tracking methods. %Smooth SB performs second-best on the Tri-stable diffusion dataset and substantially outperforms other approaches on the more challenging N Body and Gaussian Process data.
	Information on evaluation metrics appears in \cref{sec:details}.}
\label{tab:obo_quan}
\begin{tabular}{l|l|r|r|r}
\toprule
Dataset & Method &  $\mathrm{JumpP} (\downarrow)$& $\mathrm{5p~acc} (\uparrow)$& $\mathrm{Mean} ~ \ell_2 (\downarrow)$\\
\midrule
Tri-stable& SSB (ours)& 1.12e-1& 0.798& 1.42e-2\\
 Diffusion& BMSB& 5.20e-1& 0.171& 6.79e-1\\
 & W2M& \textbf{2.25e-2}& \textbf{0.956}& \textbf{1.37e-8}\\
 \midrule
 N Body& SSB (ours)& \textbf{5.00e-4}& \textbf{0.999}& \textbf{5.48e-4}\\
 & BMSB& 1.14e-1& 0.641& 5.43e-1\\
 & W2M& 1.08e-1& 0.649& 5.50e-1\\
  \midrule
 2D & SSB (ours)& \textbf{5.60e-3}& \textbf{0.991}&\textbf{3.57e-4}\\
 Gaussian & BMSB& 1.37e-1& 0.612&2.82e-1\\
 Process& W2M& 6.60e-2& 0.760&2.32e-1\\
\end{tabular}
\end{table}

\subsection{Point clouds trajectory inference}
We also test our algorithm on four challenging baselines in the trajectory inference literature. We compare our algorithm quantitatively on tasks of LOT with three other state-of-the-art algorithms: MIOFlow~\cite{huguet2022manifold}, DMSB~\cite{CheLiuTao23} and F\&S~\cite{pmlr-v130-chewi21a}. A visualization of the output of our algorithm appears in~Figure~\ref{FIG:visual_all} and the quantitative results for the LOT tasks are provided in Table~\ref{tab:lot2}. The full experimental details appear in~\cref{sec:details}.

\begin{figure}[t]
	\centering
	\includegraphics[scale=0.4]{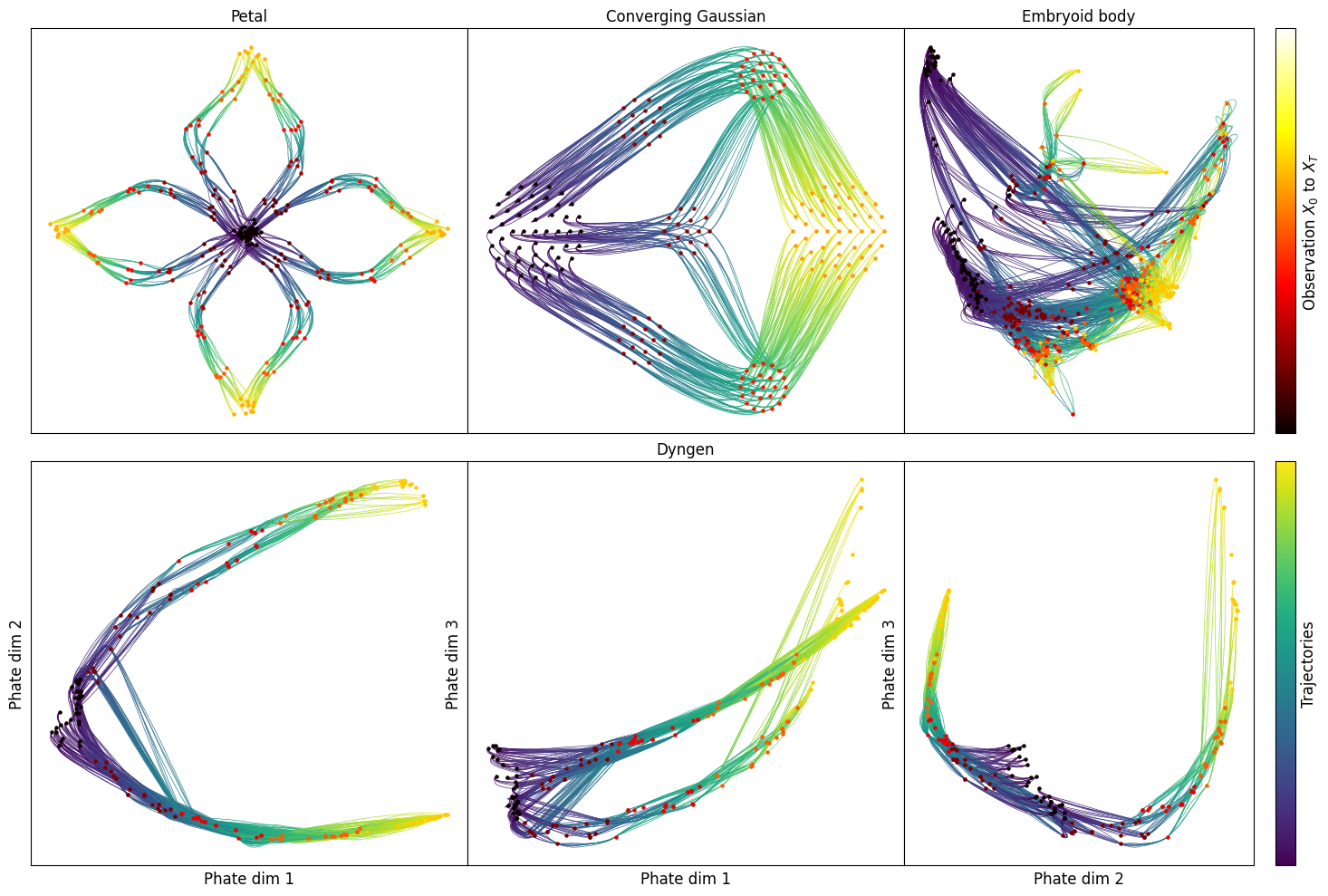}
	
	\caption{Visualization of trajectory inference on various datasets by lifted SB, for the $5$D Dyngen Tree data, we visualize the 2D projection of the first three dimensions in the second row.}
    \label{FIG:visual_all}
\end{figure}

\begin{table}[h!]
\centering
\caption{Performance comparison on the Leave-One-Timestep-Out task at step $j$ between our algorithm and the state-of-art algorithms.
Our algorithm is typically \textbf{best} or \textit{second-best}.  DMSB failed to converge on the Dyg dataset despite extensive tuning.  Information on evaluation metrics appears in \cref{sec:details}.}
\label{tab:lot2}
\begin{tabular}{l|l|r|r|r}
\toprule
Dataset & Method &  $\mathbf{W}_1 (\downarrow)$& $\mathbf{M}_G (\downarrow)$& $\mathbf{M}_I (\downarrow)$\\
\midrule
Petal& SSB (ours)& \textit{2.70e-2}& \textbf{2.21e-5}& \textbf{3.75e-3}\\
 $j = 2$&  MIOFlow& 2.22e-1& 9.06e-3& 1.11e-1\\
 & DMSB& 2.10e-1& 5.52e-3& 3.68e-2\\
 & F\&S& \textbf{2.05e-2}& \textit{2.85e-5}&\textit{4.48e-3}\\
\midrule
 EB& SSB (ours)& \textbf{8.45e-2}& 2.46e-3& 5.04e-2\\
 $j = 2$&  MIOFlow& 1.34e-1& \textbf{1.36e-3}& \textbf{2.81e-2}\\
 & DMSB& 1.46e-1& 9.43e-3& 9.72e-2\\
 & F\&S& \textit{8.72e-2}& \textit{1.47e-3}&\textit{3.88e-2}\\
\midrule
Dyg& SSB (ours)& \textit{9.81e-2}& \textbf{1.64e-3}& \textbf{3.51e-2}\\
 $j = 1$& MIOFlow 
& 2.33e-1& 2.82e-2& 1.73e-1\\
 & DMSB
& *& *&* \\
 & F\&S& \textbf{9.78e-2}& \textit{2.00e-3}& \textit{4.64e-2}\\
\end{tabular}
\end{table}
\section{Discussions and Future Directions}
We have presented a new method for trajectory inference and particle tracking based on smooth Schrödinger bridges, which achieves very good performance on a number of challenging benchmarks.
The main limitations of our proposal are related to the approximate implementation developed in Section~\ref{sec:approx_bp}.
As we have discussed, our proposal suffers from the curse of dimensionality, because the number of coefficients $M$ typically scales exponentially with respect to both the order of the GAP ($m$) and the dimension of the observations ($d$).
In numerical experiments, relatively small values of $M$ (of order 1000) seem to perform well for problems up to dimension 5.
An important question for future work is to either develop an approach with better dimensional scaling or, alternatively, show that the exponential scaling in dimension is unavoidable, as is the case for Wasserstein barycenters~\cite{AltBoi22}.

Implementing our approach also requires selecting a suitable GAP to use as a reference process.
As our experiments make clear, it is not necessary that the reference process match the data generating process precisely (see, e.g., Figure~\ref{FIG:Matern_1D}).
However, picking an appropriate variance for the Gaussian process is important for good performance (see Figure~\ref{FIG:c_compare}).
In our experiments, choosing $\sigma \approx \sigma_{\mathrm{data}}$ where $\sigma_{data} \in \mathbb{R}^d$, the empirical variance of the observations along each dimension, typically works well.

Finally, we have considered a definition of the Schrödinger bridge which enforces the strict marginal constraint $P_k = \mu_k$.
In applications, it is natural to assume that observations of the particles are corrupted with noise, which motivates a version of the SB with an approximate constraint $P_k \approx \mu_k$~\cite{ChiZhaHei22,LavZhaKim24}.
It is possible to incorporate noisy observations into the graphical model framework we describe above by introducing a suitable potential at the factor nodes $\alpha_k$.
We leave this extension to future work.

\section*{Acknowledgments}
JNW acknowledges the support of NSF grant DMS-2339829.

\bibliography{ref}
\bibliographystyle{plainnat}

\appendix
%\onecolumn
\section{List of notation}\label{sec:notation}
\begin{itemize}
	\item $\mathcal P(\cX)$: the set of Borel probability measures on $\cX$.
	\item $\mathrm{D}(\cdot | \cdot)$: Kullback--Leibler divergence. For probability measures $P$ and $R$, $$D(P | R) := \begin{cases}\int \log \frac{d P}{d R} \, dP & P \ll R \\ + \infty & \text{otherwise.}\end{cases}$$
	\item $C(\bR; \bR^d)$, $C^m(\bR; \bR^d)$: continuous (respectively, $m$-times continuously differentiable) functions from $\bR$ to $\bR^d$.
    \item $[N], [N]^+$: for a positive integer $N$, $[N]$ is the set ${0, ..., N}$ while $[N]^+$ is the set $1, ..., N$.
    \item $\bR^+$: the set of positive real numbers
    \item $\cX_k$: the support of $\mu_k$, assumed to be finite, $\cY_k$: the space $\bR^{(m-1)d}$, identified with the possible values of $(d\omega/dt, \dots, d^{m-1}\omega/dt^{m-1})$ at $t = t_k$, $\cZ_k$: the phase space $\cX_k \times \cY_k$.
    \item $\cX_{[T]} = \prod_{k \in [T]} \cX_k$, $\cY_{[T]} = \prod_{k \in [T]} \cY_k$, $\cZ_{[T]} = \prod_{k \in [T]} \cZ_k$.
    \item $\bone_{o_k}(x_k)$: the indicator function $\bone_{o_k}(x_k) := \begin{cases}
    	1 & \text{if $x_k = o_k$} \\
    	0 & \text{otherwise.}
    \end{cases}$
    \item $\odot$ and $\oslash$: point-wise multiplication and division
\end{itemize}

\section{Additional results and omitted proofs}\label{sec:proofs}
	The following lemma shows that any Markov Gaussian process with differentiable sample paths is essentially trivial.
\begin{lemma} \label{lemma: Gauss-Markov Non-Diff}
    Let $\omega: t \mapsto \omega(t)$ be a real-valued Gaussian process that is Markovian and almost surely differentiable. Then $\omega(t) = \bE[\omega(t) | \omega(0)]$ for all $t \geq 0$ almost surely.
\end{lemma}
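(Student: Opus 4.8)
The plan is to show that the two hypotheses together force $\mathrm{Var}(\omega(t)\mid\omega(0)) = 0$ for every $t\ge 0$; since $(\omega(0),\omega(t))$ is jointly Gaussian, this is precisely the assertion $\omega(t) = \bE[\omega(t)\mid\omega(0)]$ a.s. Before starting I would make two harmless reductions. First, assume $\omega$ is centered: the difference quotients $h^{-1}(\omega(t+h)-\omega(t))$ converge a.s., hence (being Gaussian) also in $L^1$, so the mean function $m(t):=\bE\,\omega(t)$ is differentiable, and passing from $\omega$ to $\omega-m$ changes neither hypothesis nor conclusion. Second, assume the process is non-degenerate, i.e.\ $\mathrm{Var}(\omega(t))>0$ and $\mathrm{Cov}(\omega(0),\omega(t))\neq 0$ for all $t$; the degenerate cases I would treat separately (they are easier, reducing to the non-degenerate case on maximal subintervals). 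Note that a.s.\ differentiability implies a.s.\ continuity of the paths, which for a Gaussian process upgrades to mean-square continuity, so the covariance $\kappa(s,t):=\mathrm{Cov}(\omega(s),\omega(t))$ is continuous.

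The first real step is to extract the structure of $\kappa$ from the Markov property. For centered jointly Gaussian variables, $\bE[\omega(t)\mid\omega(s)]=\beta(s,t)\,\omega(s)$ with $\beta(s,t):=\kappa(s,t)/\kappa(s,s)$, and for $s\le u\le t$ the tower property and the Markov property give $\beta(s,t)\,\omega(s)=\bE\bigl[\bE[\omega(t)\mid\omega(u)]\,\big|\,\omega(s)\bigr]=\beta(u,t)\beta(s,u)\,\omega(s)$, so $\beta(s,t)=\beta(s,u)\beta(u,t)$; instantiating with $(0,s,t)$ gives $\beta(s,t)=\beta(0,t)/\beta(0,s)$. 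Setting $q(t):=\beta(0,t)=\kappa(0,t)/\kappa(0,0)$ (continuous, nowhere zero) and $p(s):=\kappa(s,s)/q(s)$, one obtains $\kappa(s,t)=p(s\wedge t)\,q(s\vee t)$. A short Schur-complement computation then yields, for $s\le t$,
\[
\mathrm{Var}(\omega(t)\mid\omega(s))=\kappa(t,t)-\kappa(s,t)^2/\kappa(s,s)=q(t)^2\bigl(\rho(t)-\rho(s)\bigr),\qquad \rho:=p/q,
\]
and nonnegativity of the left side shows $\rho$ is non-decreasing — this is just the classical triangular form of a Gauss--Markov covariance. In particular $\mathrm{Var}(\omega(t)\mid\omega(0))=q(t)^2(\rho(t)-\rho(0))$, so it suffices to prove that $\rho$ is constant.

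The second real step brings in differentiability. Fixing $s$, since $h^{-1}(\omega(s+h)-\omega(s))\to\omega'(s)$ a.s.\ and a sequence of Gaussians converging in probability converges in $L^2$, we get $\mathrm{Var}(\omega(s+h)-\omega(s))=h^2\,\mathrm{Var}(\omega'(s))+o(h^2)=O(h^2)$ as $h\to 0$ (alternatively, a.s.\ boundedness of the difference quotients near $0$ together with the Borell--TIS inequality gives a uniform bound $C_s h^2$). Since a conditional variance is the smallest mean-squared error among affine predictors, $\mathrm{Var}(\omega(s+h)\mid\omega(s))\le\mathrm{Var}(\omega(s+h)-\omega(s))=O(h^2)$ for $h>0$, and symmetrically $\mathrm{Var}(\omega(s)\mid\omega(s+h))=O(h^2)$ for $h<0$; combined with the displayed formula and $q(s+h)^2\to q(s)^2>0$ this gives $|\rho(s+h)-\rho(s)|=O(h^2)$. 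Hence $\rho$ is differentiable everywhere with $\rho'\equiv 0$, so $\rho$ is constant by the mean value theorem, and therefore $\mathrm{Var}(\omega(t)\mid\omega(0))=q(t)^2(\rho(t)-\rho(0))=0$ for all $t$, which is the claim.

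The step I expect to be the crux is the quantitative estimate $\mathrm{Var}(\omega(s+h)-\omega(s))=O(h^2)$: turning the purely qualitative hypothesis of a.s.\ differentiability into a covariance bound is exactly where Gaussianity is indispensable, via the fact that Gaussians converging in probability converge in $L^2$ (equivalently, Gaussian concentration / Borell--TIS). The only other point requiring care is the bookkeeping for the degenerate cases, which I would handle by isolating maximal intervals on which $\mathrm{Var}(\omega(\cdot))$ and $\mathrm{Cov}(\omega(0),\omega(\cdot))$ stay nonzero and arguing that $\omega$ must be constant on the complement.
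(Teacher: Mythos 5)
Your argument in the non-degenerate case is correct, and it takes a genuinely different route from the paper's. The paper works directly with the Markov covariance identity $K(t_1,t_2)K(t_2,t_3)=K(t_1,t_3)K(t_2,t_2)$, differentiates it twice to deduce that $\omega'(s)$ is perfectly correlated with $\omega(s)$, propagates this backwards via the quantity $\mathrm{Var}(\omega'(s)\mid\omega(t))$, and integrates $\omega'$ to conclude. You instead use the triangular factorization $\kappa(s,t)=p(s\wedge t)\,q(s\vee t)$ and convert a.s.\ differentiability into the covariance estimate $\mathrm{Var}(\omega(s+h)-\omega(s))=O(h^2)$ via $L^2$-convergence of a.s.-convergent Gaussians, which forces the nondecreasing function $\rho=p/q$ to have zero derivative. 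Your route avoids having to justify the existence and probabilistic interpretation of the mixed partial $\partial^2K/\partial x_1\partial x_2$ (which the paper asserts without proof), and it isolates exactly where Gaussianity enters; the paper's route is organized from the start around the point where the variance vanishes, which is the regime your argument does not cover.

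That degenerate case is the genuine gap, and it is not bookkeeping: it is where the whole difficulty of the statement lives, and your proposed patch (``$\omega$ must be constant on the complement'') is false. Consider $\omega(t)=tZ$ with $Z\sim N(0,1)$: this is a centered Gaussian process with smooth paths, and it is Markov, since for $s>0$ both $\sigma(\omega(u):u\le s)$ and $\sigma(\omega(s))$ equal $\sigma(Z)$ (its covariance $K(s,t)=st$ also satisfies the identity above). Yet $\bE[\omega(t)\mid\omega(0)]=0\neq\omega(t)$. So once $\mathrm{Var}(\omega(\cdot))$ or $\mathrm{Cov}(\omega(0),\omega(\cdot))$ is allowed to vanish, the conclusion as stated can genuinely fail, and no maximal-interval argument rescues it: on an interval where the variance is positive, your argument (and the paper's) shows only that $\omega(s)$ is a deterministic linear multiple of $\omega(t)$, which does not make $\omega$ constant in time, and the regression coefficient can blow up as $t$ approaches the zero of the variance (here $\beta(t,s)=s/t$). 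The paper's own proof slips at exactly this point in its final step, where ``$\mathrm{Var}(\omega(s)\mid\omega(t))=0$ for all $t<s$'' is read as ``$\omega$ is a.s.\ constant on the interval.'' The honest repair is either to add a non-degeneracy hypothesis ($\mathrm{Var}(\omega(t))>0$ and $\mathrm{Cov}(\omega(s),\omega(t))\neq0$ throughout), under which your argument is complete, or to weaken the conclusion to ``$\omega(t)$ is $\sigma(\omega(s))$-measurable whenever $\mathrm{Var}(\omega(s))>0$,'' which is what both arguments actually establish and which still supports the paper's claim that such processes are essentially trivial.
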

\begin{proof}[Proof of Lemma \ref{lemma: Gauss-Markov Non-Diff}]
		Conditioning on $\omega(0)$ and considering the law of $\omega(t) - \omega(0)$ conditioned
		on $\omega(0)$, we may assume without loss of generality that $\omega(0) = 0$ and, by subtracting the mean, that $\bE[\omega(t)] = 0$ for all $t \geq 0$.
		Our goal is to prove that $\omega \equiv 0$. Since $\omega$ is an almost surely differentiable Gaussian process, we know that the covariance kernel $K(t, s) := \bE[\omega_t \omega_s] = \mathrm{Cov}(\omega(t), \omega(s))$ is differentiable in both coordinates and $\frac{\partial^2}{\partial x_1 \partial x_2} K$ exists.    
    It is well known (see \cite{doi:10.1137/1127097}) that a real-valued Gaussian process is Markovian if and only if, for any $t_1 \le t_2 \le t_3$, we have   
\begin{align} \label{eq: corr_relation_for_Markov}
        K(t_1, t_2) K(t_2, t_3) = K(t_1, t_3) K(t_2, t_2).
\end{align}
Let $0 \leq t \leq s$ and $\epsilon > 0$. It follows from \eqref{eq: corr_relation_for_Markov} that $K(s, s) K(t, s+\epsilon) = K(t, s) K(s, s+\epsilon)$, which implies that 
\begin{align} \label{eq: first_order_derivative_K}
    K(s, s) \frac{\partial}{\partial x_2} K(t, s) = K(t, s) \frac{\partial}{\partial x_2} K(s, s).
\end{align}
Similarly, we also have $K(t, t) \frac{\partial}{\partial x_1} K(t, s) = K(t, s) \frac{\partial}{\partial x_1} K(t, t)$. Combining those two identities, it follows that 
\begin{align} \label{eq: second_order_derivative_K}
    K(t, t) K(s, s) \frac{\partial^2}{\partial x_1 \partial x_2} K(t, s) = K(t, s) \frac{\partial}{\partial x_1} K(t, t) \frac{\partial}{\partial x_2} K(s, s).
\end{align}
Note that $\frac{\partial}{\partial x_1}K(t, s) = \mathrm{Cov}(\omega'(t), \omega(s))$ and $\frac{\partial^2 }{\partial x_1 \partial x_2} K(t, s) = \mathrm{Cov}(\omega'(t), \omega'(s))$ by changing the order of differentiation and expectation.  

Assume there exists some $s^* > 0$ such that $\mathrm{Var}(\omega(s^*)) > 0$ , and we aim to show that this assumption leads to an contradiction. By the continuity of $f: s \mapsto \mathrm{Var}(\omega(s))$, we know that there exists $t_0 < s^*$ such that $f(t_0)  = 0$ and, for any $t \in (t_0, s^*)$, we have  $f(t) > 0$. Therefore, $\omega(t_0)$ is almost surely 0.

For any $s \in (t_0, s^*]$, by taking $t = s$ in \eqref{eq: second_order_derivative_K}, it follows that
\begin{align} \label{eq: perfect_corr}
    \mathrm{Var}(\omega'(s)) \mathrm{Var}(\omega(s)) = \mathrm{Cov}(\omega'(s) \, \omega(s))^2.
\end{align}
We claim that $(\omega'(s), \omega(s))$ is jointly Gaussian with mean $0$. Indeed, we know that 
$$\left(\frac{\omega(s+\epsilon) - \omega(s)}{\epsilon}, \omega(s)\right)$$
are jointly Gaussian and converge almost surely to $(\omega'(s), \omega(s))$ as $\epsilon \rightarrow 0$. Thus, the claim follows from the fact that any weak limit of a sequence of Gaussian distributions is again Gaussian. Moreover, the limiting value of the mean and covariances gives the mean and covariances of the limiting distribution. Therefore, for any $s \in (t_0, s^*]$, since we have $\mathrm{Var}(\omega(s)) > 0$, it follows that 
$$\mathrm{Var}(\omega'(s) | \omega(s)) = \mathrm{Var}(\omega'(s)) - \mathrm{Cov}(\omega'(s) \, \omega(s))^2 \, \mathrm{Var}(\omega(s))^{-1} = 0.$$
Let $g_s: (t_0, s] \mapsto \bR^+$ be given by 
\begin{align}
    g_s(t) := \mathrm{Var}(\omega'(s) | \omega(t)).
\end{align}
We have shown that $g_s(s) = 0$. We also have $g'_s \equiv 0$ on $(t_0, s]$. Indeed, for any $t \in (t_0, s]$, we have  
\begin{align}
    g_s'(t) &= \frac{\partial}{\partial t} \left( \mathrm{Var}(\omega'(s)) - \mathrm{Cov}(\omega'(s), \omega(t))^2 \mathrm{Var}(\omega(t))^{-1} \right) \\
    &= \frac{2}{K(t, t)^2} \left( \frac{\partial}{\partial x_2}K(t, t) \frac{\partial}{\partial x_2}K(t, s)^2  - \frac{\partial}{\partial x_2}K(t, s)\frac{\partial^2}{\partial x_1 \partial x_2} K(t, s) K(t, t) \right) \\
    &= \frac{2}{K(t, t)^2} \left( \frac{\partial}{\partial x_2}K(t, t) \frac{\partial}{\partial x_2}K(t, s)^2  - \frac{\partial}{\partial x_2}K(t, s) K(t, s) \frac{\partial}{\partial x_2} K(t, t) \frac{\frac{\partial}{\partial x_2} K(s, s)}{K(s, s)} \right) \\
    &= \frac{2}{K(t, t)^2} \left( \frac{\partial}{\partial x_2}K(t, t) \frac{\partial}{\partial x_2}K(t, s)^2  - \frac{\partial}{\partial x_2}K(t, s)^2 \frac{\partial}{\partial x_2} K(t, t) \right) = 0,
\end{align}
where the third line uses \eqref{eq: second_order_derivative_K} and the last line uses \eqref{eq: first_order_derivative_K}. Consequently, we have shown that $\mathrm{Var}(\omega'(s) | \omega(t)) = 0$ for any $t, s \in (t_0, s^*]$ with $t < s$. Moreover, for any $t, s$ with $t < s$, it follows from the Jensen's inequality that  
\begin{align}
    \mathrm{Var}\left(\omega(s) | \omega(t)\right) = \mathrm{Var}\left(\omega(s) - \omega(t) \,| \, \omega(t)\right) \le \int_{t}^s \mathrm{Var}\left( \omega'(\ell) | \omega(t)  \right) d\ell.
\end{align}
Therefore, we must have $\mathrm{Var}\left(\omega(s) | \omega(t)\right) = 0$ for any $t, s \in (t_0, s^*]$ with $t < s$, which implies that $\omega$ is almost surely a constant on the interval $(t_0, s^*]$. However, since we also know that $\omega(t_0) = 0$ and $\omega$ is continuous, we must have $\omega(s^*) = 0$, which contradicts our assumption that $\mathrm{Var}(\omega(s^*)) > 0$. 

\end{proof}

\subsection{Proof of \cref{lem:dyn_to_stat}}
This follows directly from known arguments in the two-marginal case~\citep[see][]{Leo14}.
We give a brief sketch.
We can decompose the measure $R$ as $R(\omega) = R(\omega \mid \bomega) R_{[K]}(\bomega)$, where $R(\omega \mid \bomega)$ denotes the conditional law of $\omega$ given the values $\bomega = (\omega_0, \dots, \omega_K)$, and similarly for $P$.
By the chain rule for Kullback--Leibler divergence, we obtain
\begin{equation*}
	\mathrm{D}(P \mid R) = \mathrm{D}(P_{[K]} \mid R_{[K]}) + \mathbb E_{\bomega \sim P_{[K]}} \mathrm{D}(P(\cdot \mid \bomega) \mid R(\cdot \mid \bomega))\,.
\end{equation*}
For any choice of $P_{[K]}$, choosing $P(\omega \mid \bomega) = R(\omega \mid \bomega)$ makes the second term vanish.
Therefore, any solution $P^*$ of~\eqref{obj: full_schrodinger} must be of the form $P_{[K]}^*(\omega)R(\omega \mid \bomega)$ for a solution $P^*_{[K]}$ of $\eqref{eq:static_sb}$, and conversely.

For the second claim, we use that if the $\mu_k$ is absolutely continuous and $R_{[K]}$ has a density, then any feasible $P_{[K]}$ must also have a density, which we also denote by $P_{[K]}$.
We obtain
\begin{align*}
	D(P_{[K]} \mid R_{[K]}) & = \int \log \frac{P_{[K]}(\bomega)} {\exp(-C(\bomega))} P_{[K]}(\bomega) \, d \bomega \\
	& = \int C(\omega) P_{[K]}(\bomega) \, d \bomega + \int P_{[K]}(\bomega) \log \frac{P_{[K]}(\bomega)} {\prod_{k \in [K]} \mu_k(\omega_k)}  \, d \bomega + \int P_{[K]}(\bomega)  \log {\prod_{k \in [K]} \mu_k(\omega_k)} \, d \bomega\,.
\end{align*}
The marginal constraints imply that
\begin{equation*}
	\int P_{[K]}(\bomega)  \log {\prod_{k \in [K]} \mu_k(\omega_k)} \, d \bomega = \sum_{k \in [K]} \int \mu_k(\omega_k) \log \mu_k(\omega_k) \, d \omega_k\,.
\end{equation*}
Since each $\mu_k$ has finite entropy by assumption, this sum is finite and constant over the constraint set, so it can be dropped from the objective without changing the solutions.
\qed

\subsection{Proof of \cref{lem:eot_to_lifted}}

Lemma \ref{lem:eot_to_lifted} can be reformulated as follows: there is a one-to-one correspondence between solutions to 
\begin{equation}
    \begin{aligned}
        \max_{P_{[K]}} \sum_{\mathbf{x} \in \mathcal{X}_{[K]}} P_{[K]}(\mathbf{x}) \log(r(\mathbf{x}))  - \sum_{\mathbf{x} \in \mathcal{X}_{[K]}} P_{[K]}(\mathbf{x}) \log\left(P_{[K]}(\mathbf{x})/\prod_{k \in [K]}\mu_k(x_k)\right) \\
        P_k = \mu_k, \forall k \in [K] \label{appendix_eq: eot}
    \end{aligned}
\end{equation}
and 
\begin{equation}
\begin{aligned}
		\max_{p} \int_{\cY_{[K]}} \sum_{\bx \in \cX_{[K]}} p(\bz) \, \log({r}(\bz)) \,  d \by - \int_{\mathcal{Y}_{[T]}} \sum_{\bx \in \cX_{[T]}} p(\bz) \log(p(\bz)) d\by\\ %+ \int_{\cY_{[T]}} \sum_{\bx \in \cX_{[T]}} p(\bz) \log p(\bz) \, d\bz\,, \\
		p_{x_k} = \mu_k \, \forall k \in [T], \label{appendix_eq:lifted_eot}
\end{aligned}
\end{equation}
where the maximization in \eqref{appendix_eq: eot} is taken over distributions in  $\mathcal{P}(\mathcal{X}_{[T]})$ and the maximization in \eqref{appendix_eq:lifted_eot} is taken over probability densities on  $\cZ_{[K]}$.

First, note for any $P_{[T]}$ satisfying the marginal constraints $P_k = \mu_k$ for all $k \in [K]$,
\begin{equation*}
	\sum_{\mathbf{x} \in \mathcal{X}_{[K]}} P_{[K]}(\bx) \log \prod_{k \in [K]}\mu_k(x_k) = \sum_{k \in [K]}\sum_{\mathbf{x} \in \mathcal{X}_{[K]}} P_{[K]}(\bx) \log \mu_k(x_k) = \sum_{k \in [K]} \sum_{x_k \in \cX_k} \mu_k(x_k) \log \mu_k(x_k)\,.
\end{equation*}
Therefore, the term $\sum_{\mathbf{x} \in \mathcal{X}_{[K]}} P_{[K]} \log \prod_{k \in [K]}\mu_k(x_k)$ in~\eqref{appendix_eq: eot} is constant on the feasible set and can be dropped from the objective.

Next, by the law of total probability,
\begin{equation}
\begin{aligned}
    \int_{\cY_{[T]}} \sum_{\bx \in \cX_{[T]}} p(\bz)\, \log({r}(\bz)) \, d \by = \int_{\cY_{[T]}} \sum_{\bx \in \cX_{[T]}} p(\bz)\, \log(r(\bx)) \, d \by + \int_{\cY_{[T]}} \sum_{\bx \in \cX_{[T]}} p(\bz)\, \log({r}(\by|\bx)) \, d \by \label{appendix_eq: expand_lifted_eot_term1}
\end{aligned}
\end{equation}
and 
\begin{equation}
\begin{aligned}
    \int_{\mathcal{Y}_{[T]}} \sum_{\bx \in \cX_{[T]}} p(\bz) \log(p(\bz)) d\by = \int_{\mathcal{Y}_{[T]}} \sum_{\bx \in \cX_{[T]}} p(\bz) \log(p(\bx)) d\by + \int_{\mathcal{Y}_{[T]}} \sum_{\bx \in \cX_{[T]}} p(\bz) \log(p(\by | \bx))\,, d\by. \label{appendix_eq: expand_lifted_eot_term2}
\end{aligned}
\end{equation}
where $r(\bx)$ and $r(\by | \bx)$ denotes the marginal density of the $\bx$ variables and conditional density of the $\by$ variables under $\tilde R_{[T]}$, and analogously for $p$.
It follows from exchanging the order of integration and summation that the first terms on the right-hand side of \eqref{appendix_eq: expand_lifted_eot_term1} and \eqref{appendix_eq: expand_lifted_eot_term2} recover the two terms in \eqref{appendix_eq: eot} (assuming we have dropped $\sum_{\mathbf{x} \in \mathcal{X}_{[K]}} P_{[K]} \log \prod_{k \in [K]}\mu_k(x_k)$ from \eqref{appendix_eq: eot}). We may combine the second terms of \eqref{appendix_eq: expand_lifted_eot_term1} and \eqref{appendix_eq: expand_lifted_eot_term2} to obtain that the remaining term in the objective of~\eqref{appendix_eq:lifted_eot} reads
\begin{equation}
    \begin{aligned}
        \sum_{\bx \in \mathcal{X}_{[T]}} p(\bx) \int_{\mathcal{Y}_{[T]}} p(\by | \bx) \log(\frac{{r}(\by|\bx)}{p(\by | \bx)}) d \by. \label{appendix_ep: conditional_KL}
    \end{aligned}
\end{equation}
It follows from the strict concavity of $\log x$ that  \eqref{appendix_ep: conditional_KL} is at most 0 and equals 0 if and only if $p(\by|\bx) = {r}(\by|\bx)$ for every $\bx$ and $p(\cdot|\bx)$-almost every $\by$.
Since ${r}$ is a probability density function, the equality condition is equivalent to saying that for every $\bx \in \mathcal{X}_{[T]}$ we have $p(\cdot|\bx) = {r}(\cdot|\bx)$ Lebesgue almost everywhere. 
Therefore, if $P_{[T]}^*$ is a maximizer for \eqref{appendix_eq: eot} then $P_{[T]}^*(\bx) {r}(\by|\bx)$ is a maximizer for \eqref{appendix_eq:lifted_eot}. On the other hand, if $p^*(\bz)$ is a maximizer for \eqref{appendix_eq:lifted_eot}, then $\int_{\mathcal{Y}_{[T]}} p^*(\bz) d\by$ is a maximizer for \eqref{appendix_eq: eot}. 

\qed

\subsection{Proof of Theorem \ref{thm: poly_time_epsilon_approx}}
We first prove that if the initialization of Algorithm \ref{alg: Multi_Sinkhorn} and Algorithm \ref{alg: BP-continuous} satisfy 
\begin{align}
     \upmessk^{(0)} = v_k^{(0)}  \quad \forall k \in [K]
\end{align}
then, at the end of each \texttt{while} loop in both algorithms,
\begin{align*}
    \downmessk^{(\ell)} &= \mathbf{S}^{(\ell)}_k, \\
    \upmessk^{(\ell)} &= v^{(\ell)}_k
\end{align*}
for all $k \in [K]$.

We proceed by induction.
We first investigate the first \texttt{for} loop in Algorithm \ref{alg: BP-continuous} (the ``left pass'').
By backwards induction on $k = K, \dots, 1$, we obtain that at the conclusion of the left pass
\begin{equation}\label{eq:left_pass_cond}
	\leftmesskminus(\bz_{k-1}) = \idotsint \sum_{x_k, \dots, x_K } \prod_{i=k}^K \Phi_{i-1}(\bz_{i-1}, \bz_i)  \upmessi^{(\ell-1)}(x_i) d \by_{k} \dots d \by_K \quad \forall k \in [K]^+\,,
\end{equation}
where the integration is taken over $(\by_k, \dots, \by_K) \in \prod_{i=k}^K \cY_i$ and the sum is taken over $(x_k, \dots, x_K) \in \prod_{i=k}^K \cX_i$

Therefore, the updates to $\downmesszero$ and $\upmesszero$ satisfy
\begin{align*}
	\downmesszero^{(\ell)}(x_0) & = \updownzeroop(1, \leftmesszero)(\bz_0) \\
	& =  \sum_{x_1, \dots, x_K } \idotsint_{\cY_{[K]}} \prod_{i=1}^K \Phi_{i-1}(\bz_{i-1}, \bz_i)  \upmessi^{(\ell-1)}(x_i) d \by \\
	& =  \sum_{x_1, \dots, x_K } \exp(-C(\bx)) \prod_{i=1}^K\upmessi^{(\ell-1)}(x_i) \\
	& = \mathbf{S}_0^{(\ell)}(x_0)\,,
\end{align*}
where the last equality uses the induction hypothesis    $ \upmessk^{(\ell-1)} = v^{(\ell-1)}_k$, and consequently
\begin{equation*}
	\upmesszero^{(\ell)}(x_0) = \mu_0(x_0)/\mathbf{S}_0^{(\ell)}(x_0) = v^{(\ell)}_0(x_0)\,.
\end{equation*}

We now show by induction on $k$ that $\downmessk^{(\ell)} = \mathbf{S}_k^{(\ell)}$ and $\upmessk^{(\ell)} = v^{(\ell)}_k$ at the conclusion of the ``right pass''.
We have already established the base case $k = 0$.
For $k \geq 1$, as in the left pass, the updates in the right pass satisfy
\begin{equation}\label{eq:right_pass_cond}
	\begin{aligned}
		\rightmessk(\bz_{k}) = \idotsint \sum_{x_0, \dots, x_{k-1}} \prod_{i=1}^k \Phi_{i-1}(\bz_{i-1}, \bz_{i}) \upmessiminus^{(\ell)}(x_i) d \by_0 \cdots d \by_{k} \,.
		%        m^{(\ell+1)}_{\alpha_{k-1, k} \rightarrow \eta_{k}}(\bz_k) = \idotsint  \sum_{x_{0}, ..., x_{k-1}} \prod_{i=0}^{k-1} \Phi_{i}(\bz_i, \bz_{i+1}) \prod_{i = 0}^{k-1} m^{(\ell+1)}_{\alpha_i \rightarrow \eta_i}(\bz_i) \, d\by_{0} \cdots d\by_{k-1}.
	\end{aligned}
\end{equation}
Combining~\eqref{eq:left_pass_cond} and~\eqref{eq:right_pass_cond} implies that
\begin{align*}
	\upmessk^{(\ell)}(x_k) & = \updownkop(\leftmessk,\rightmessk) \\
	& = \sum_{\substack{i=0 \\ i \neq k}}^K \idotsint_{\cY_{[K]}}  \prod_{i=0}^{K-1}\Phi_{i}(\bz_{i}, \bz_{i+1}) \prod_{i=0 }^{k-1} \upmessi^{(\ell)}(x_i) \prod_{i=k+1}^K \upmessi^{(\ell - 1)}(x_i) d\by \\
	& = \sum_{\substack{i=0 \\ i \neq k}}^K \idotsint_{\cY_{[K]}} \exp(-C(\bx))  \prod_{i=0 }^{k-1} \upmessi^{(\ell)}(x_i) \prod_{i=k+1}^K \upmessi^{(\ell - 1)}(x_i) \\
	& = \mathbf{S}_k^{(\ell)}(x_k)\,,
\end{align*}
where the final equality holds by induction.
As in the $k =0$ case, this implies $\upmessk^{(\ell)} = v^{(\ell)}_k$.
This proves the claim.

The first implication, on the time complexity of Algorithm~\ref{alg: BP-continuous}, follows directly from existing analysis of multi-marginal entropic optimal transport problems~\cite{AltBoi23,JMLR:v23:19-843}.
Indeed, those works show that Algorithm~\ref{alg: Multi_Sinkhorn} yields an $\epsilon$-approximate solution to a discrete multi-marginal entropic optimal transport problem with arbitrary cost tensor $C$ in $\mathrm{Poly}(K, C_{\max}/\epsilon)$ iterations.
As each iteration of Algorithm~\ref{alg: Multi_Sinkhorn} corresponds to an iteration of Algorithm~\ref{alg: BP-continuous}, a similar guarantee holds for Algorithm~\ref{alg: BP-continuous}.
In particular, \citep[Theorem 4.3 and proof of Theorem 4.5]{JMLR:v23:19-843} shows that $K \bar R/\eps$ iterations suffice, where $\bar R$ can be taken to be of order $C_{\max} + \log(K n C_{\max}/\epsilon)$.

The second implication, on the form of the optimum, follows from the same considerations as Lemma~\ref{lem:eot_to_lifted}.
The proof of Lemma \ref{lem:eot_to_lifted} establishes a direct link between solutions of \eqref{eq:eot_problem} and \eqref{eq:lifted_eot}. Specifically, if $P^*_{[T]}$ is the optimal solution for \ref{eq:eot_problem}, then $p^*$ defined by $p^*(\bz) := {r}(\by | \bx)  P^*_{[T]}(\bx)$ is the corresponding solution for \eqref{eq:lifted_eot}. Notice that $P^*(\bx) \propto r(\bx) \prod_{k=0}^{K} v^*_k(x_k)$, with $\{ v^*_k \}$ being the fixed point of Algorithm \ref{alg: Multi_Sinkhorn}. Thus, 
$$p^*(z) \propto {r}(\bz) \prod_{k=0}^{K} v^*_k(x_k) \propto \prod_{k=1}^{K} \Phi_{k}(\bz_{k-1}, \bz_{k}) \prod_{k=0}^{K} v^*_k(x_k),$$
proving \eqref{eq: full_decomposition_p^*}.
Since we have already shown that the iterations of Algorithms \ref{alg: Multi_Sinkhorn} and \ref{alg: BP-continuous}  agree, they have the same fixed points, and $v^*_k = \upmessk^*$, as desired.
\qed

\subsection{Proof of \cref{lem:messages}}
We compute:
\begin{align*}
	\ell_{k-1}^i(x_{k-1}) & = \int_{\cY_{k-1}} \phi_{k-1}^i(\by_{k-1}) \leftmesskminus(\bz_{k-1}) d \by_{k-1} \\
	& = \sum_{x_{k} \in \cX_{k}} \iint_{\cY_{k} \times \cY_{k-1}}\phi_{k-1}^i(\by_{k-1}) \Phi_{k}(\bz_{k-1}, \bz_{k})  \leftmessk(\bz_k) \upmessk(x_k) d \by_{k} d \by_{k-1} \\
	& = \sum_{x_{k} \in \cX_{k}} \sum_{j=1}^\infty \iint_{\cY_{k} \times \cY_{k-1}}\phi_{k-1}^i(\by_{k-1}) \Phi_{k}(\bz_{k-1}, \bz_{k})  \ell_k^i(x_k) \phi_k^j(\by_k) \upmessk(x_k) d \by_{k} d \by_{k-1} \\
	& = \sum_{x_k \in \cX_k} \upmessk(x_k) \sum_{j=1}^\infty \ell_k^j(x_k) 	\Gamma^{ij}_{k}(x_{k-1}, x_{k})\,.
\end{align*}
The computation for $r_k^i$ is analogous.
\qed

\section{Practical implementation with the Haar basis}\label{sec:sampling}
The Haar Wavelet basis stand out as an natural choice in the context of this work because of its positivity properties: since the messages need to be positive, using orthonormal bases that consist of positive functions guarantees that the approximation is positive as well. Recall that $\mathcal{X}_k$ is a finite collection with $n$ members and $\mathcal{Y}_k = \bR^{m-1}$. Set a sequence of positive integers $M_1, \ldots, M_{m-1}$, with each number tied to a dimension of $\mathcal{Y}_k$. Our goal is to utilize $M = M_1 \times \cdots \times M_{m-1}$ orthonormal polynomials to execute the algorithm. Let $\vec{i} \in [M_1]^+ \times \cdots \times [M_{m-1}]^+$ with $\vec{i}_n$ representing the $n$-th component of $\vec{i}$. We define the following:
\begin{equation}
    \begin{aligned}
        \phi_k^{\vec{i}} &:= Z_k \mathbf{1}_{B_k^{\vec{i}}} \\
        B_k^{\vec{i}} &:= \prod_{n=1}^{m-1} \left[-\frac{K_n}{2} \delta_k + (\vec{i}_n - 1) \delta_k, -\frac{K_n}{2} \delta_k + \vec{i}_n \delta_k \right)
    \end{aligned}
\end{equation}
Here, $Z_k$ serves as the normalization constant ensuring that $\phi_k^{\vec{i}}$ has a unit $L^2$-norm; effectively, $Z_k$ is the reciprocal of the volume of the hypercube $B_k^{\vec{i}}$. The parameter $\delta_k$ is chosen via:
\begin{align}
    \frac{K_n}{2} \delta_k = C \cdot \sqrt{\max\left\{\mathrm{Var}(\eta_k^{(2)}), \ldots, \mathrm{Var}(\eta_k^{(m)})\right\}}
\end{align}
where $C$ is a hyper-parameter that one can tune and a typical choice is 3. With the set $\{ \phi_k^{\vec{i}} \}$ determined, we show how to compute $\mathbf{\Gamma}_0, ..., \mathbf{\Gamma}_{K-1}$ at the start of Algorithm \ref{alg: final_message_passing}. By setting $\delta_k$ to be very small, we can leverage this approximation:
\begin{align}
    \Gamma_{k}(x_k, x_{k+1}, \vec{i}, \vec{j}) &= \iint \phi_k^{\vec{i}}(\mathbf{y}_k) \phi_{k+1}^{\vec{j}}(\mathbf{y}_{k+1}) \Phi_{k}(\mathbf{z}_k, \mathbf{z}_{k+1}) d\mathbf{y}_k d\mathbf{y}_{k+1} \\
    &\approx \mathrm{Vol}( B_k^{\vec{i}})  \mathrm{Vol}( B_{k+1}^{\vec{j}})  \Phi_{k}(x_k, \mathbf{y}^{\vec{i}}_k, x_{k+1} \mathbf{y}^{\vec{j}}_{k+1}),
\end{align}
where $\mathbf{y}_k^{\vec{i}}$ is the midpoint of the hypercube $B_k^{\vec{i}}$. Remember, $\Phi_k(\mathbf{z}_k, \mathbf{z}_{k+1})$ indicates the conditional density of the reference process $r(\mathbf{z}_{k+1}| \mathbf{z}_{k})$ for $k \ge 1$, and shifts to the unconditional joint density $r(\mathbf{z}_0, \mathbf{z}_{1})$ at $k = 0$. Evaluating the relevant Gaussian densities allows us to compute $\Gamma_{k}(x_k, x_{k+1}, \vec{i}, \vec{j})$ efficiently. 

When Algorithm \ref{alg: final_message_passing} reaches convergence, the representation in~\eqref{eq: full_decomposition_p^*} can be used to efficiently manipulate $p$ (for instance, to generate samples).
In particular, \eqref{eq: full_decomposition_p^*} shows that $p$ inherits the Markov property of $r$, so to sample from $p$ it suffices to compute the pairwise marginals $p(\bz_{k}, \bz_{k+1})$, which are proportional to
\begin{equation*}
	\idotsint \sum_{\substack{i \in [K] \\ i \neq k, k+1}} \sum_{x_i \in \cX_k} \prod_{k = 1}^{K} \Phi_{k}(\mathbf{z}_{k-1}, \mathbf{z}_{k}) \prod_{k = 0}^{K} \upmessk^* (x_k) d \by_0 \cdots d \by_{k-1} d \by_{k+2} \cdots \by_K\,.
\end{equation*}
This may be computed efficiently by repeatedly contracting one coordinate at a time, as in the left pass of \cref{alg: BP-continuous}.

When \cref{algo: base_approx_mp} is implemented using wavelets, $p(\bz_{k}, \bz_{k+1})$ can be approximated by $\mathbf{p}_k(x_k, \vec{i}, x_{k+1}, \vec{j})$, which approximates $p(x_k, y^{\vec{i}}_k, x_{k+1}, y^{\vec{j}}_{k+1})$. With $\mathbf{p}(x_k, \vec{i}, x_{k+1}, \vec{j})$ in hand, we propose two trajectory inference methods. The first method, which is stochastic, generates trajectory samples following the distribution $p(\bz_k, \bz_{k+1})$, as outlined in Algorithm \ref{alg: traj_inf_sampling}.
\begin{algorithm}
	\caption{Trajectory Inference Through Sampling} \label{alg: traj_inf_sampling}
	\begin{algorithmic}[h]
		\STATE \textbf{Input:} $\mathbf{p}_0, ..., \mathbf{p}_{K-1} \in \bR^{n \times n \times M \times M}$ and  $x_0 \in \mathcal{X}_0$
		\STATE Marginalize $\mathbf{p}_0$ to a probability distribution on $\mathcal{X}_0 \times \{ \by_{0}^{\vec{i}} \}$, denote it as $\mathbf{p}_{\mathrm{int}}$
        \STATE Sample $\by_0$ proportional to $\mathbf{p}_{\mathrm{int}}(x_0, \cdot)$
        \FOR{$k = 0, ..., K-1$}
            \STATE Let $\vec{i}_k$ be the index of $\by_k$ in $\{ \by_k^{\vec{i}} \}$
            \STATE Sample $(x_{k+1}, \by_{k+1})$ proportional to $\mathbf{p}_k(x_k, \vec{i}_k, \cdot, \cdot)$
        \ENDFOR
		\STATE Return $\{ x_0, ..., x_K \}$ and $\{ y_0, ..., y_K \}$ 
	\end{algorithmic}
\end{algorithm}

The other way to rebuild the trajectories involves utilizing argmax operations on the belief tensor, and it is summarized as Algorithm \ref{alg: traj_inf_argmax}
\begin{algorithm}[h]
	\caption{Trajectory Inference Through ArgMax} \label{alg: traj_inf_argmax}
	\begin{algorithmic}
		\STATE \textbf{Input:} $\mathbf{p}_0, ..., \mathbf{p}_{K-1} \in \bR^{n \times n \times M \times M}$ and  $x_0 \in \mathcal{X}_0$
		\STATE Marginalize $\mathbf{p}_0$ to a probability distribution on $\mathcal{X}_0 \times \{ \by_{0}^{\vec{i}} \}$, denote it as $\mathbf{p}_{\mathrm{int}}$
        \STATE Set $\by_0$ as the maximizer of $\mathbf{p}_{\mathrm{int}}(x_0, \cdot)$
        \FOR{$k = 0, ..., K-1$}
            \STATE Let $\vec{i}_k$ be the index of $\by_k$ in $\{ \by_k^{\vec{i}} \}$
            \STATE Set $(x_{k+1}, \by_{k+1})$ as the maximizer of $\mathbf{p}_k(x_k, \vec{i}_k, \cdot, \cdot)$
        \ENDFOR
		\STATE Return $\{ x_0, ..., x_K \}$ and $\{ y_0, ..., y_K \}$ 
	\end{algorithmic}
\end{algorithm}

\section{Log-domain Implementation of Algorithm \ref{alg: final_message_passing}} \label{sec: log_domain_implementation}

In the execution of the algorithm, some values could fall below machine precision, triggering numerical problems. Take, for instance, the wavelet basis: we aim to estimate $\Gamma_k$ by computing $\Phi_k$ at a specific point. However, if $\mathbf{y}_k$ or $\mathbf{y}_{k+1}$ drifts significantly away from its mean, $\Phi_k$ ends up zero. For the sake of numerical stability, we derive the log-domain implementation of Algorithm \ref{alg: final_message_passing} in the section. Recall that $M$ is the total number of orthonormal polynomials that we use to perform the approximate message passing algorithm. Define the log version of quantities
\begin{align}
    \hat{\ell}_k(x_k) := \log (\ell_k(x_k)) &\text{ and } \hat{r}_k(x_k) := \log (r_k(x_k)) \\
    \hat{c}_k(x_k) :=  \log(c_k(x_k))  \text{ and } &\hat{\Gamma}_{k}^{i, j}(x_k, x_{k+1}) := \log\left( \Gamma_{k}(x_k, x_{k+1}, i, j) \right)
\end{align}
 For $k = 0, ..., K$, the log-domain updates are summarized as follows:
 \begin{align}
     \hat{\ell}_k^i(x_k) &=  \log \Bigg( \sum_{x_{k+1} \in \mathcal{X}_{k+1}} \sum_{j=1}^M  \exp\Big( \hat{c}_{k+1}(x_{k+1})  + \hat{\ell}_{k+1}^j(x_{k+1}) + \hat{\Gamma}_{k}^{i, j}(x_k, x_{k+1}) \Big) \Bigg)
 \end{align}
 \begin{equation}
    \begin{aligned}
       \hat{r}_t^i(x_t) &=  \log \Bigg( \sum_{x_{t-1} \in \mathcal{X}_{t-1}} \sum_{j=1}^M \exp\Big( \hat{c}_{t-1}(x_{t-1})  + \hat{r}_{t-1}^j(x_{t-1}) + \hat{\Gamma}_{t-1}^{j, i}(x_{t-1}, x_{t}) \Big) \Bigg)
    \end{aligned}
\end{equation}
\begin{align*}
     \hat{c}_k(x_k) :=  \log(c_k(x_k)) = - \log \left(  \sum_{j = 1}^M \exp\left( \hat{\mathbf{l}}_{k}^j(x_{k}) +  \hat{\mathbf{r}}_{k}^j(x_{k}) \right)  \right).
\end{align*}
Note that one can use the \texttt{logsumexp} function in scientific computing packages to implement the log-domain updates given above. And, for the wavelet decomposition, $\hat{\Gamma}_{k}^{i, j}(x_k, x_{k+1})$ can be efficiently approximated by calling a SciPy \texttt{scipy.stats.norm.logpdf} function.

\section{Experiment Details}\label{sec:details}
We first summarize the different data sets, metrics, and experimental findings.

For all experiments, we consider datasets with the number of points being constant at each step and each particle has uniform weight. For the kernel, we use either Matern kernel with $\nu=1.5$ or $\nu=2.5$.

\subsection{Individual particle tracking}
\begin{figure}[t]
	\centering
	\includegraphics[scale=0.20]{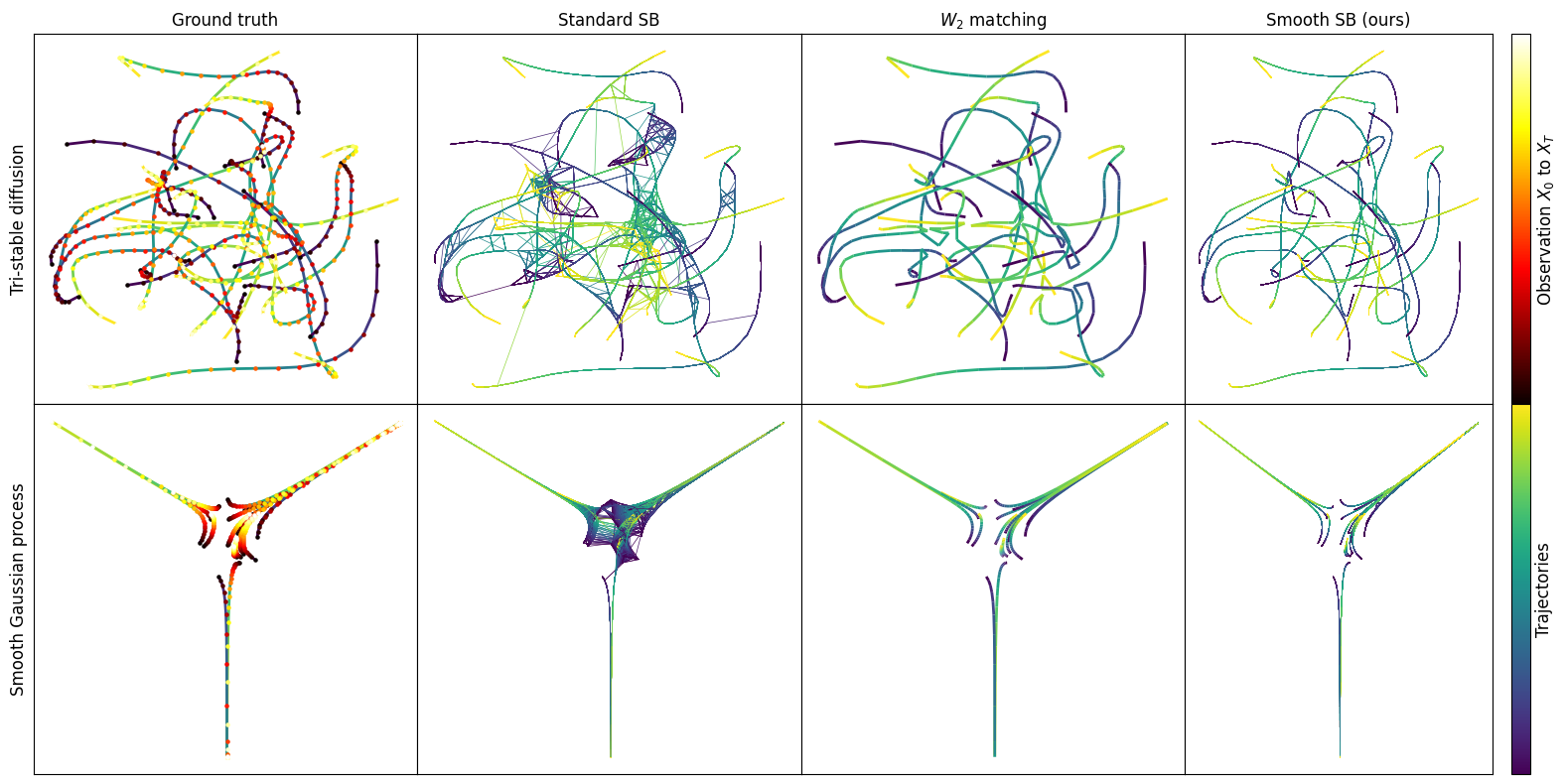}
	
	\caption{Visualization of matching results on Tri-stable diffusion and 2D Matern datasets.}
	\label{FIG:trima}
\end{figure}
\textbf{Smooth Gaussian process: (2D version of Figure~\ref{FIG:Matern_1D}}): Each particle follows a $2D$ Gaussian process with dimension-independent covariance matrix generated by Matern kernel with $\nu=3.5,\ell=1, \sigma=1$. Our algorithm can recover most of the trajectories correctly by selecting smoother trajectories than the other two algorithms. \\

\textbf{Tri-stable diffusion process:} Particles are initialized randomly around zero by a normal distribution and follow the evolution of a deterministic dynamic system. This dynamic is considered in~\cite{LavZhaKim24}, each particle will be absorbed into one of the three attraction points with increasing velocity as the distance to the attraction point decreases. We find when particles are close to each other and have similar velocities, our algorithm has the chance to lose track of the particles during sampling. The $W_2$ matching performs almost perfectly in this scenario by choosing the closest particle.\\
\\
\textbf{N-body physical system:} This dataset simulates the orbits of eight planets that circle a star and the task is to keep track of each planet's orbit in the system.  The 2D version is considered in~\cite{pmlr-v130-chewi21a}, but they only consider smooth interpolations between objects and provide no guarantee of the correctness of matching.  The visualization result is presented in~Figure~\ref{FIG:nbody_f1}. The $W_2$ matching and standard SB make many mistakes when trajectories are close to each other while our algorithm gives near-perfect tracking.

Our quantitative evaluation appears in Table~\ref{tab:obo_quan}. We use several metrics to evaluate our results:  $\mathrm{JumpP}$ is the observed likelihood that a particle transitions to a different path at each time increment. $\mathrm{5p~acc}$ quantifies the proportion of a sequence of five consecutive steps that remain on the same trajectory. The $\mathrm{Mean}~\ell_2$ metric is the average of all $\ell_2$ distances between each sampled trajectory and the ground truth trajectory that maintains the longest continuous alignment with the sample.

\subsection{Point cloud inference}

\textbf{Petal:} The Petal dataset was introduced in~\cite{huguet2022manifold} which mimics natural dynamics of cellular differentiation and bifurcation. \\
\\
\textbf{Converging Gaussian:}  This dataset was constructed by~\cite{clancy2022wasserstein}. It models cellular dynamics using Gaussian point clouds that split and converge over time. Points are sampled evenly around the center of the circles.\\
\\
\textbf{Embryoid Body:} This sc-RNAseq dataset records statistical data of embryoid body (EB) differentiation over a period of 27 days with 5 snapshots taken between days of 0-3, 6-9, 12-15, 18-21, and 24-27. The dimension of the original data was reduced to $2$ using a nonlinear dimensionality reduction technique called PHATE~\cite{moon2019visualizing} in~\cite{TonHuaWol20}. We inherit this 2D projection dataset from them.
\\
\\\textbf{Dyngen (Dyg):} This is another sc-RNAseq dataset crafted by~\cite{huguet2022manifold} using Dyngen~\cite{cannoodt2021spearheading}. The data is embedded into dimension 5 also using PHATE. This dataset contains one bifurcation and is considered to be more challenging than the Petal dataset. 

We subsample each dataset to a moderate size and make $n_t$ constant over all timesteps. 

Our quantitative evaluation uses the following metrics: $\mathbf{W}_1$ is the Wasserstein-$\ell_1$ distance, $\mathbf{M}_G$ is the Maximum Mean Discrepancy with Gaussian kernel $K(x, y) := \sum_{i, j = 1}^{n} (x_i - y_j)^2/2$, and $\mathbf{M}_I$ is the Maximum Mean Discrepancy with identity kernel $K(x, y) = \langle x, y \rangle$. 

Table~\ref{tab:lot_additional} presents additional results on other LOT tasks.
\begin{table}[h!]
	\centering
	\caption{Performance comparison on LOT tasks at step $j$ between our algorithm and the state-of-art algorithms.}
	\label{tab:lot_additional}
	\begin{tabular}{l|l|r|r|r}
		\toprule
		Dataset & Method &  $\mathbf{W}_1 (\downarrow)$& $\mathbf{M}_G (\downarrow)$& $\mathbf{M}_I (\downarrow)$\\
		\midrule
		Petal& SSB (ours)& \textit{2.98e-2}& \textbf{3.26e-5}& \textbf{5.53e-3}\\
		$j = 4$& MIOFlow 
		& 1.76e-1& 9.97e-3& 1.14e-1\\
		& DMSB
		& 2.63e-1& 9.18e-3& 1.15e-2\\
		& F\&S& \textbf{2.44e-2}& \textit{4.41e-5}& \textit{7.33e-3}\\
		\midrule
		EB& SSB (ours)& \textbf{6.00e-2}& \textbf{3.88e-4}& \textbf{2.04e-2}\\
		$j = 3$& MIOFlow 
		& 1.29e-1& 3.31e-3& 5.26e-2\\
		& DMSB
		& 2.46e-1& 4.69e-2& 2.37e-1\\
		&F\&S& \textit{7.42e-2}& \textit{1.46e-3}& \textit{3.88e-2}\\
		\midrule
		Dyg& SSB (ours)& \textit{1.09e-1}& \textit{4.67e-3}& \textit{6.73e-2}\\
		$j = 2$& MIOFlow 
		& 2.23e-1& 1.57e-2& 1.34e-1\\
		& DMSB
		& *& *& *\\
		& F\&S& \textbf{9.23e-2}& \textbf{1.43e-3}& \textbf{3.93e-2}\\
	\end{tabular}
\end{table}
\subsection{Implementation details}
We ran our experiments on an x86-64 setup. Smooth SB (ours) and F\&S do not require GPU support; for DMSB and MIOFlow experiments, we utilized an NVIDIA A100-SXM4-80GB. In tracking individual particles, standard SB with Brownian motion prior and $\mathbf{W}_2$ matching is implemented by Python's \href{https://pythonot.github.io/}{OT} library~\cite{flamary2021pot}. Check out Table~\ref{tab:data_set_infro} for detailed dataset info.\\

\begin{table}[h!]
\centering
\caption{The table provides detailed statistics of all the datasets we experimented on. Since the number of particles is constant over time, we use $n$ to denote the number of observations at each time step and $K$ stands for the total number of time steps excluding the initial observation. The Figure column stands for the label of figures where this dataset is presented in the main body text.}
\label{tab:data_set_infro}
\begin{tabular}{l|l|r|r|r|r}
\toprule
Name & $K$ & $n$& Dimension& Figure &Author\\
% 1D Smooth& LSB (ours)& \textbf{4.76e-2}& \textbf{0.893}& \textbf{2.55e-3}\\
%   Gaussian& BMSB& & & \\
%  Process& W2M& 2.18e-1& 0.412& 1.53e-1\\
% \midrule
\midrule
Smooth Gaussian Process& 20& 20& 1&\ref{FIG:Matern_1D}& us\\
 \midrule
Smooth Gaussian Process (2D)& 20& 25& 2&\ref{FIG:trima}& us\\
 \midrule
Tri-stable Diffusion& 20& 20 & 2& \ref{FIG:trima}&\cite{LavZhaKim24}\\
 \midrule
 N Body& 50& 8 & 3 & \ref{FIG:nbody_f1} & us\\
 \midrule
  Petal& 5& 40&2&\ref{FIG:Petal_f1},\ref{FIG:visual_all}&\cite{huguet2022manifold} \\
 \midrule
  Converging Gaussian& 3& 48&2&\ref{FIG:visual_all}&\cite{clancy2022wasserstein}\\
  \midrule
  Embryoid Body& 4&100 &2 &\ref{FIG:visual_all}&\cite{TonHuaWol20} \\
  \midrule
  Dyngen & 4& 40 &5 &\ref{FIG:visual_all}&\cite{huguet2022manifold} \\
\end{tabular}
\end{table}
Throughout our experiments, we employ either a Matern prior with parameters $\nu=1.5, \ell=3$ or $\nu=2.5, \ell=2$, as specified in~\eqref{eq:materncov}, utilizing the wavelet basis. We conduct $200$ iterations of message-passing algorithms ($T=200$). The initial covariance, $\tilde{\Sigma}_{0,0}$, is derived from the stationary distribution's covariance of the lifted Gaussian process corresponding to the kernel, ensuring $\tilde{\Sigma}_{k,k}$ remains steady over time (refer to Equation (2.39) in~\cite{Saa12} for stationary distribution calculation). We assume all observations occur equally over the time period $t=2$, hence the observation duration is $dt=2/K$, which is utilized to compute covariance between timesteps. For datasets with dimensions greater than one, the kernel is assumed to be identical and independent for each dimension, simplifying the tensor $\Gamma$ by factoring it across dimensions, thereby reducing the computational load in the matrix-vector multiplication during the message-passing phase. We assume an equal allocation of coefficients per dimension, thus $M_i = (M)^\frac{1}{d}$ for each $i=1,...,d$. For the kernel with $\nu=2.5$, we express $M_i = (m_1,m_2)$ as a tuple where $M_i = m_1m_2$ represents the count of approximation coefficients in the velocity and acceleration dimension. Each task has a fixed $M_i$ choice, with the kernel's variance calculated as $\sigma = c\sigma_{data}$, where $\sigma_{data}$ is the dataset's standard deviation and $c$ is an adjustable hyper-parameter. Detailed parameter settings for each task can be found in~\ref{tab:parameter_info} and~\ref{tab:lot_info}.
\begin{table}[h!]
\centering
\caption{Detailed parameter setting of our algorithm for all the experiments for Figures~\ref{FIG:Matern_1D},~\ref{FIG:trima},~\ref{FIG:visual_all} and~\ref{FIG:nbody_f1}}
\label{tab:parameter_info}
\begin{tabular}{|l|r|r|r|r|r|}
\toprule
Dataset Name  & $\nu$ & M& $M_i$ & Dimension & $c$\\
% 1D Smooth& LSB (ours)& \textbf{4.76e-2}& \textbf{0.893}& \textbf{2.55e-3}\\
%   Gaussian& BMSB& & & \\
%  Process& W2M& 2.18e-1& 0.412& 1.53e-1\\
% \midrule
\midrule
Smooth Gaussian Process& 2.5& 200 & (40,5)& 1 & 1\\
 \midrule
Smooth Gaussian Process (2D)& 2.5& 1024& (8,4)& 2 & 1\\
 \midrule
Tri-stable Diffusion& 1.5& 1024 & 32 & 2 & 1\\
 \midrule
 N Body& 2.5 & 1728 & (4,3) & 3 & 4\\
 \midrule
  Petal& 1.5 & 400 & 20 & 2 & 0.5 \\
 \midrule
  Converging Gaussian& 1.5 & 900 & 30& 2& 0.5\\
  \midrule
  Embryoid Body& 1.5&144 & 12 &  2 &1  \\
  \midrule
  Dyngen & 1.5& 1024 &4  & 5 & 0.5\\
\bottomrule
\end{tabular}
\end{table}

\begin{table}[h!]
\centering
\caption{Detailed parameter setting of our algorithm for the LOT tasks in Table~\ref{tab:lot_additional} and~\ref{tab:lot2} where we only used Matern kernel with $\nu=1.5$ and all the parameters are the same as the corresponding experiment in Table~\ref{tab:parameter_info} except for $c$.}
\label{tab:lot_info}
\begin{tabular}{|l|c|c|c|c|c|c|}
\toprule
&Petal $j=2$& Petal $j=4$ & EB $j=2$ & EB $j=3$ & Dyngen $j=1$ & 
 Dyngen $j=2$  \\
\midrule
c& 0.5& 0.45 & 1 & 1 & 3 & 0.25\\
\bottomrule
\end{tabular}
\end{table}

\textbf{Implementation of Sampling:} In the point cloud matching, where creating trajectories or inferring positions of the left-out timestep requires generating new points, we use conditional Gaussian sampling. Initially, $\mathbf{y}_k$ for all data points are sampled from calculated probability tensors $\mathbf{p}_k$. Under wavelet basis, the sample we get pins down the range of the $\mathbf{y}_k$ , we then calculate the conditional Gaussian density for $\mathbf{y}_k$ given $\mathbf{x}_k$ and simply use sampling and rejecting to get a valid velocity sample.  Given that the lifted Gaussian process is Markovian, we condition on two consecutive observations to generate extra points for constructing trajectories or estimating positions at unobserved timesteps. 
\\
\subsection{Details of other algorithms}

\textbf{Standard Schrödinger Bridges:} The single hyper-parameter in regular SB is the scaling factor $s$ for the correlation between successive time steps, defined as $\mathrm{cov}(i,j) = s\min({i,j})$. Table~\ref{tab:SSB_setting} shows the configuration.
\begin{table}[h!]
\centering
\caption{Detailed parameter setting of standard Schrödinger Bridges}
\label{tab:SSB_setting}
\begin{tabular}{|c|c|c|c|}
\toprule
&Smooth Gaussian process (1D \& 2D)& N-Body  & Tri-stable Diffusion   \\
\midrule
s& 0.5 & 0.3 & 0.03 \\
\bottomrule
\end{tabular}
\end{table}
\\
\\\textbf{F\&S:} Natural cubic splines were our interpolation method of choice, as detailed in {\citealp[Appendix D]{pmlr-v130-chewi21a}}, and we wrote our own code for implementation. The visual results of the same experiments in Figure~\ref{FIG:visual_all} are shown in Figure~\ref{FIG:visual_all_fsi}. In the case of the Dyngen dataset, F\&S's matching is significantly poorer than what our algorithm achieves. This underscores a crucial flaw of the standard $W_2$ matching—it lacks the capacity to split mass effectively when the data distribution is uniform. The primary culprit is Dyngen's asymmetric bifurcation structure. 
\begin{figure}[h]
	\centering
	\includegraphics[scale=0.35]{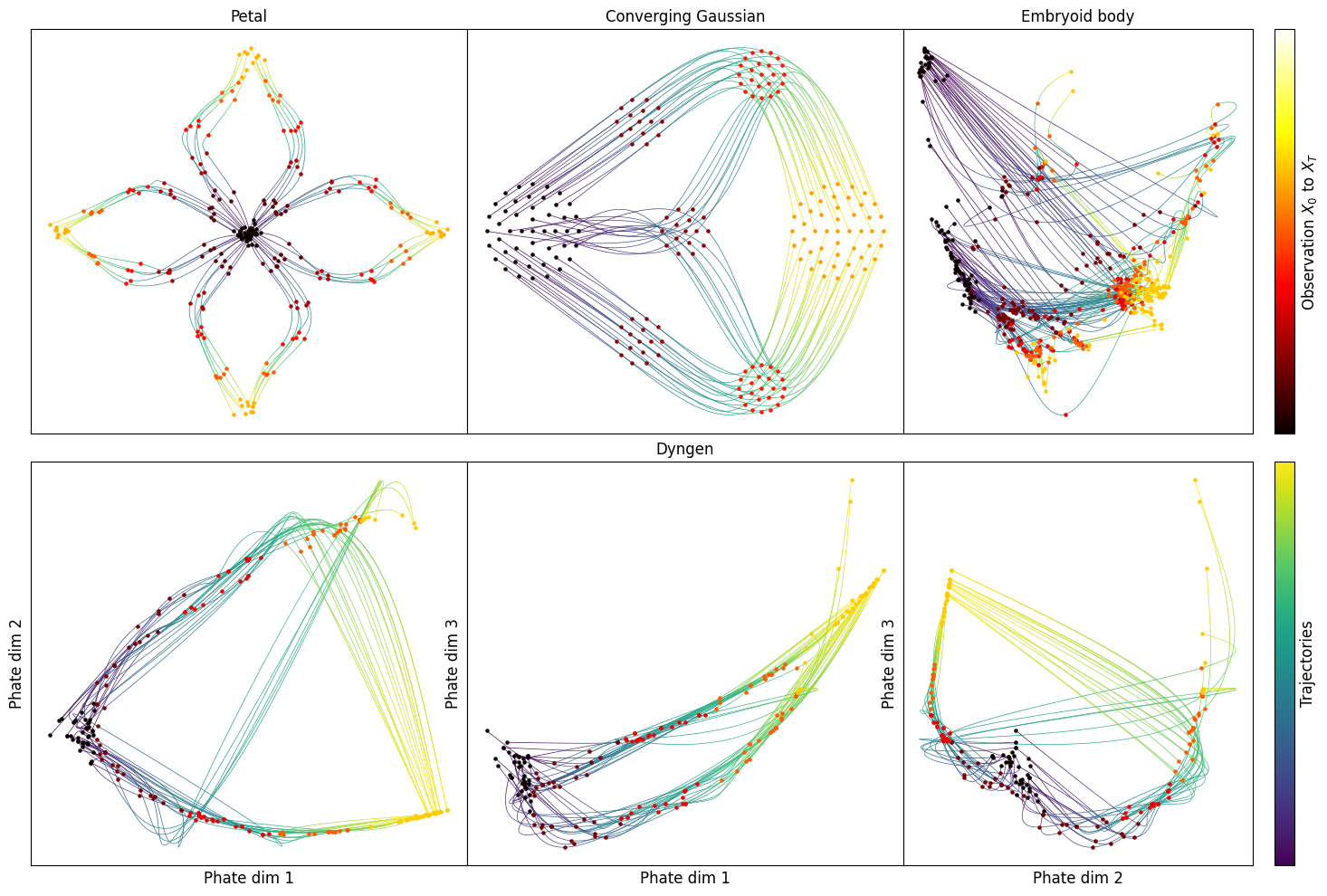}
	
	\caption{Visualization of trajectory inference on various dataset by F\&S, for the $5$D Dyngen Tree data, we visualize the 2D projection of the first three dimensions in the second row.}
    \label{FIG:visual_all_fsi}
\end{figure}
\\\textbf{\href{https://github.com/KrishnaswamyLab/MIOFlow}{MIO Flow} \footnote{https://github.com/KrishnaswamyLab/MIOFlow}:} We build upon the baseline settings from~\cite{huguet2022manifold}, with tweaks for optimal performance. Here's the prime parameter configuration we achieved:,\begin{enumerate}
\item \textbf{Petal (complete):}~$\lambda_e = 1e-2, \lambda_d = 25$, other variables adhere to the default Petal $(hold\_one\_out = False)$ setup.
\item \textbf{Petal (leave one out):}~$\lambda_e = 1e-3, \mathrm{n\_local\_epochs}=40, \mathrm{n\_epochs}=0$, remaining parameters align with the default Petal $(hold\_one\_out = True)$ configuration.
\item \textbf{Embryoid Body (complete \& leave one out):} $\mathrm{gae\_embeded\_dim}=2$,~other settings match the default Embryoid Body configuration.
\item \textbf{Converging Gaussian (complete):} $\lambda_e = 1e-3, \lambda_d = 15, \mathrm{n\_local\_epochs}=20$, all others follow the default Petal $(\mathrm{hold\_one\_out} = \mathrm{False})$ guidelines.
\item \textbf{Dyngen (complete):} parameters match the default Dyngen $(\mathrm{hold\_one\_out} = \mathrm{False})$ specifications.
\item \textbf{Dyngen (leave one out):}~$\mathrm{n\_local\_epochs}=0, \mathrm{n\_epochs}=50$, rest conforms to the default Dyngen $(\mathrm{hold\_one\_out} = \mathrm{True})$ settings.
\end{enumerate},Visual comparisons for these experiments, as seen in Figure~\ref{FIG:visual_all}, are presented in Figure~\ref{FIG:visual_all_mio}.
\begin{figure}[h]
	\centering
	\includegraphics[scale=0.35]{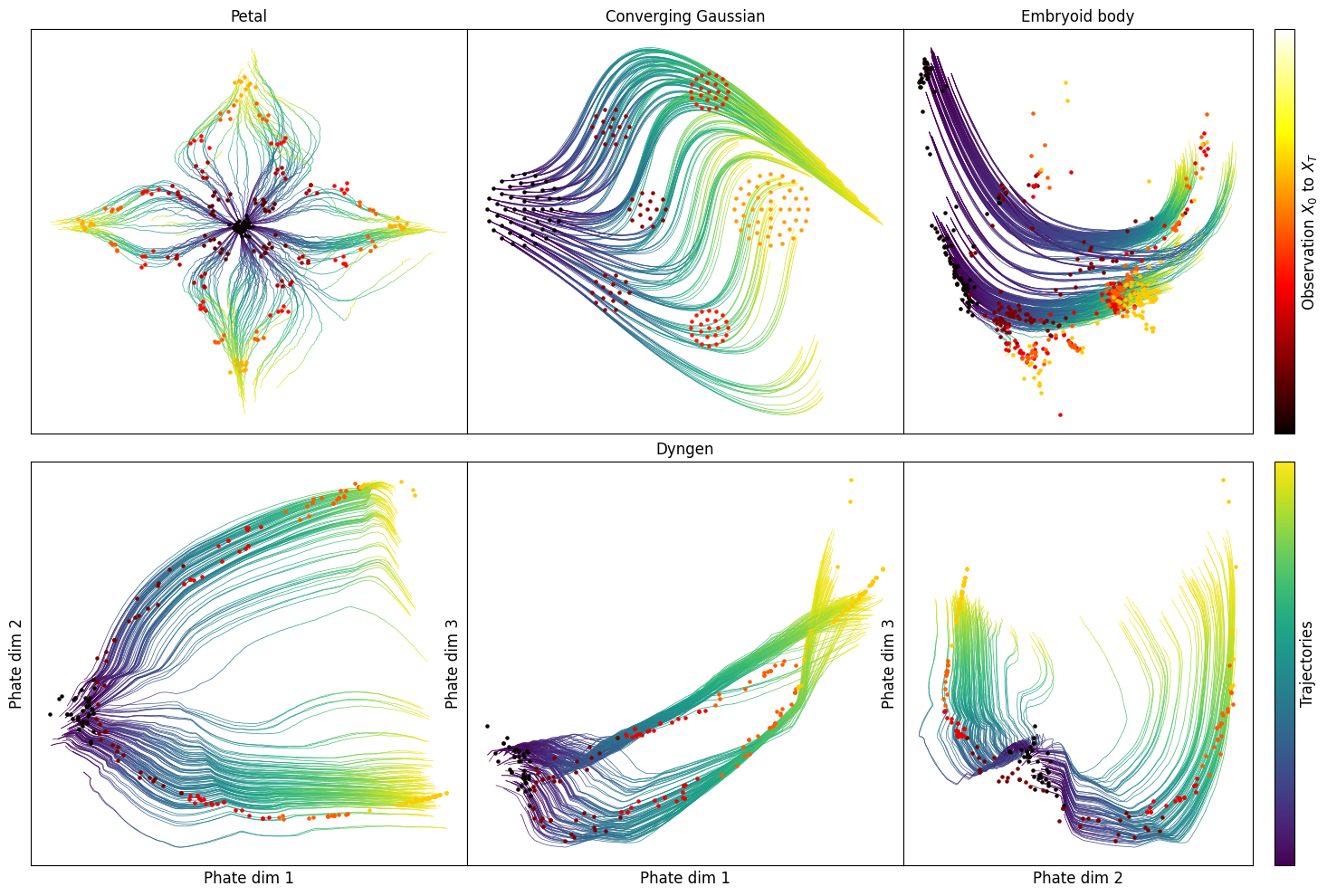}
	
	\caption{Visualization of trajectory inference on various dataset by MIO, for the $5$D Dyngen Tree data, we visualize the 2D projection of the first three dimensions in the second row.}
    \label{FIG:visual_all_mio}
\end{figure}

\textbf{\href{https://github.com/TianrongChen/DMSB}{DMSB}\footnote{https://github.com/TianrongChen/DMSB}:} By primarily using the default parameters from~\cite{CheLiuTao23}, alongside some refinements, here's the top-notch parameter set:,\begin{enumerate}
\item \textbf{Petal (leave one out):}~$\mathrm{n\_epoch} = 2, \mathrm{num\_stage}=13, num\_marg=6$, other values are consistent with the default Petal setup.
\item \textbf{Embryoid Body (leave one out):}~$n\_epoch = 2, num\_stage=13$, other settings align with the default Petal configuration.
\end{enumerate}, tuning this algorithm as each network requires approximately $5$ hours for training. We anticipate enhanced DMSB performance with more extensive data and training duration. Moreover, the trajectory generation process of DMSB does not capture the geometry of the dataset well (see Figure~\ref{FIG:DMSB_visual} as an illustration), hence we do not provide additional visualization for this algorithm here.

\begin{figure}[h]
	\centering
	\includegraphics[scale=0.5]{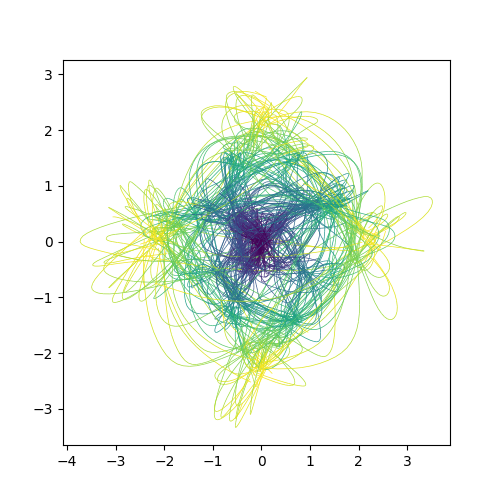}
	
	\caption{Trajectories drawn by DMSB can not reflect the underlying geometric structure of the Petal dataset well.}
    \label{FIG:DMSB_visual}
\end{figure}
\subsection{Additional Results and Analysis}
\textbf{Extended quantitative results on one-by-one particle tracking:} We provide more metrics in Table~\ref{tab: particle_matching_performance_metrics} for evaluating the performance of one-by-one particle tracking, which is an extend version of Table~\ref{tab:obo_quan}. 
\begin{itemize}
    \item $\mathrm{JumpP}$: the observed likelihood that a particle transitions to a different path at every increment in time. 
    \item $\mathrm{3p~acc}$: the proportion of a sequence of three consecutive steps that remain on the same trajectory. 
    \item $\mathrm{5p~acc}$: the proportion of a sequence of five consecutive steps that remain on the same trajectory. 
    \item $\mathrm{Traj~acc}$: the fraction of trajectories sampled that are correctly matched at each step.
\end{itemize}

For a given sampled trajectory, we match it with a ground-truth trajectory that maintains the longest continuous alignment with the sample. We say this ground truth trajectory is the \textit{matched} trajectory of the sample.

\begin{itemize}
    \item $ \mathrm{Max}~\ell_2$: the maximum $\ell_2$ distance found between a sampled trajectory and its matched ground truth across every sample.
    \item $\mathrm{Mean}~\ell_2$: the average $\ell_2$ distance found between a sampled trajectory and its matched ground truth across every sample.
    \item $\mathrm{TrajKL}$: the KL-divergence between the matched trajectory histogram and the uniform distribution over the ground-truth trajectories.
\end{itemize}

\begin{table}[h!]
\centering
\begin{tabular}{l|l|c|c|c|c|c|c|c}
\toprule
 \textbf{Dataset}& \textbf{Model}& $\mathrm{JumpP}$& $\mathrm{3p~acc}$& $\mathrm{5p~acc}$& $\mathrm{Traj~acc}$& $ \mathrm{Max}~\ell_2$& $\mathrm{Mean}~\ell_2$& $\mathrm{TrajKL}$\\
\midrule
1D Mat\'ern& SSB (ours)& \textbf{4.77e-2}& \textbf{0.930}& \textbf{0.893}& \textbf{0.530}& \textbf{1.912e-01}& \textbf{2.552e-03}& \textbf{0.000360}\\
 Process& BME& 6.08e-1& 0.191& 0.079& 0.001& 1.463e+00& 2.708e-01& 0.036132\\
 & W2 & 2.18e-1& 0.621& 0.412& 0.050& 4.641e-01& 1.526e-01& 0.138629 \\
\midrule
Tri-stable & SSB (ours)& 1.12e-1& 0.852& 0.798& 0.489& 1.020e-01& 1.419e-02& 0.011770 \\
 Diffusion& BME& 5.20e-1& 0.300& 0.170& 0.018& 6.787e-01& 6.595e-02& 0.047383 \\
 & W2 & \textbf{2.25e-2}& \textbf{0.971}& \textbf{0.956}& \textbf{0.800}& \textbf{1.377e-08}& \textbf{2.112e-09}& \textbf{0.000000}\\
\midrule
N body& SSB (ours)& \textbf{5.00e-4}& \textbf{0.999}& \textbf{0.999}& \textbf{0.983}& \textbf{2.083e-01}& \textbf{5.475e-04}& \textbf{0.000100}\\
 & BME& 1.14e-1& 0.796& 0.641& 0.000& 9.719e-01& 5.426e-01& 0.026030 \\
 & W2 & 1.08e-1& 0.804& 0.649& 0.000& 8.679e-01& 5.495e-01& 0.173287 \\
\midrule
2D Mat\'ern& SSB (ours)& \textbf{5.60e-3}& \textbf{0.993}& \textbf{0.991}& \textbf{0.904}& \textbf{4.181e-03}& \textbf{3.571e-04}& \textbf{0.000000}\\
 Process& BME& 1.38e-1& 0.764& 0.612& 0.179& 1.418e+00& 2.819e-01& 0.038109 \\
 & W2 & 6.60e-2& 0.867& 0.760& 0.360& 1.015e+00& 2.321557e-01 & 0.110904 \\
 \bottomrule
\end{tabular}
\caption{Performance metrics for different models across various datasets for particle matching.}
\label{tab: particle_matching_performance_metrics}
\end{table}
\textbf{Visualization of LOT tasks:} We provide the visualization of the sampled point in the Leave-One-Out tasks in Figures~\ref{FIG:visual_all_lot1} and~\ref{FIG:visual_all_lot2} corresponding to results in Table~\ref{tab:lot2}. For all the cases, our algorithm is able to draw similar patterns to the ground truth. 

\begin{figure}[h]
	\centering
	\includegraphics[scale=0.35]{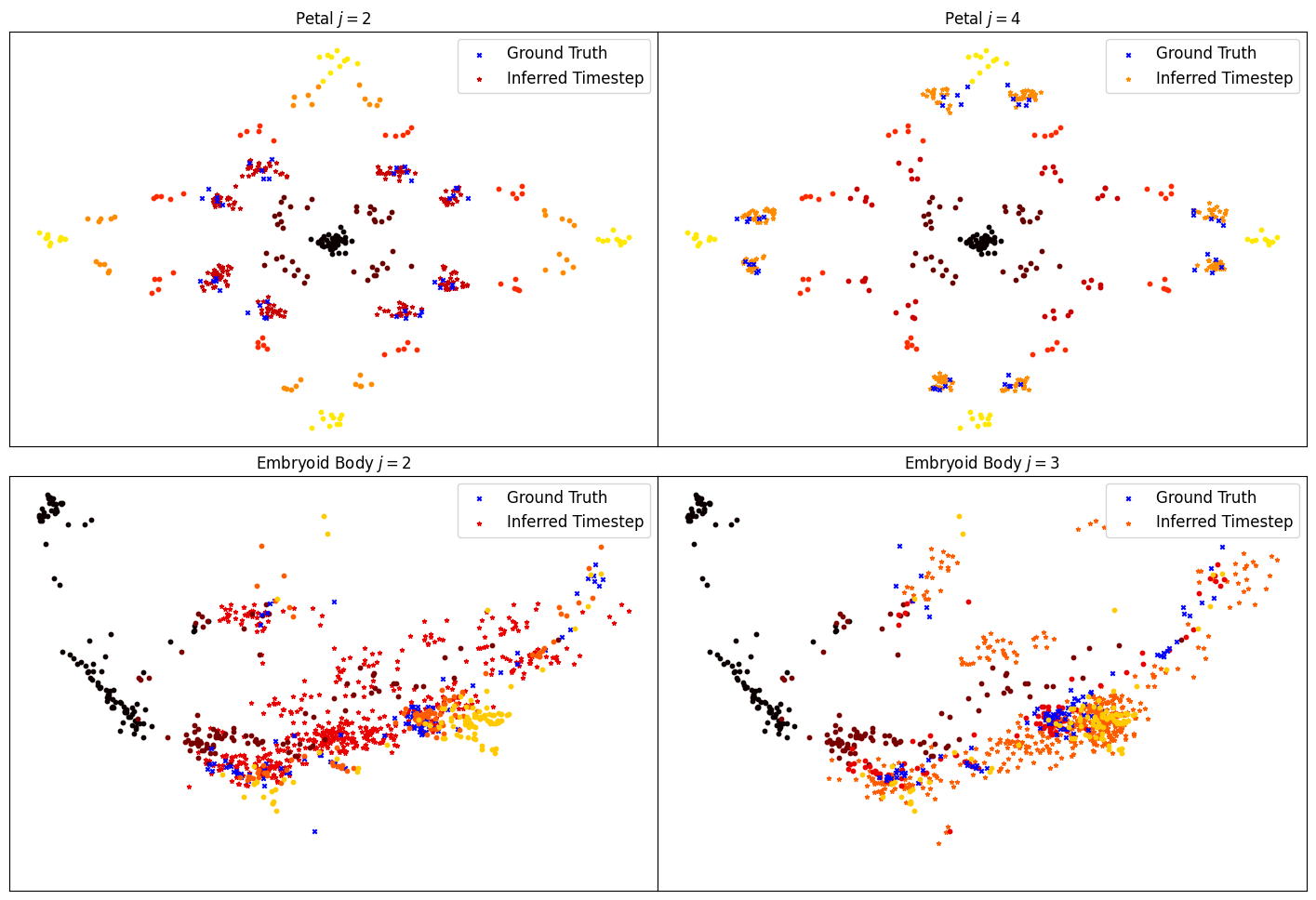}
	
	\caption{Visualization of sampled timestep in the LOT tasks for Dyngen datasets. The blue 'x' points are the left-out ground truth, the star points are inferred timestep by our algorithm.}
    \label{FIG:visual_all_lot1}
\end{figure}

\begin{figure}[h]
	\centering
	\includegraphics[scale=0.35]{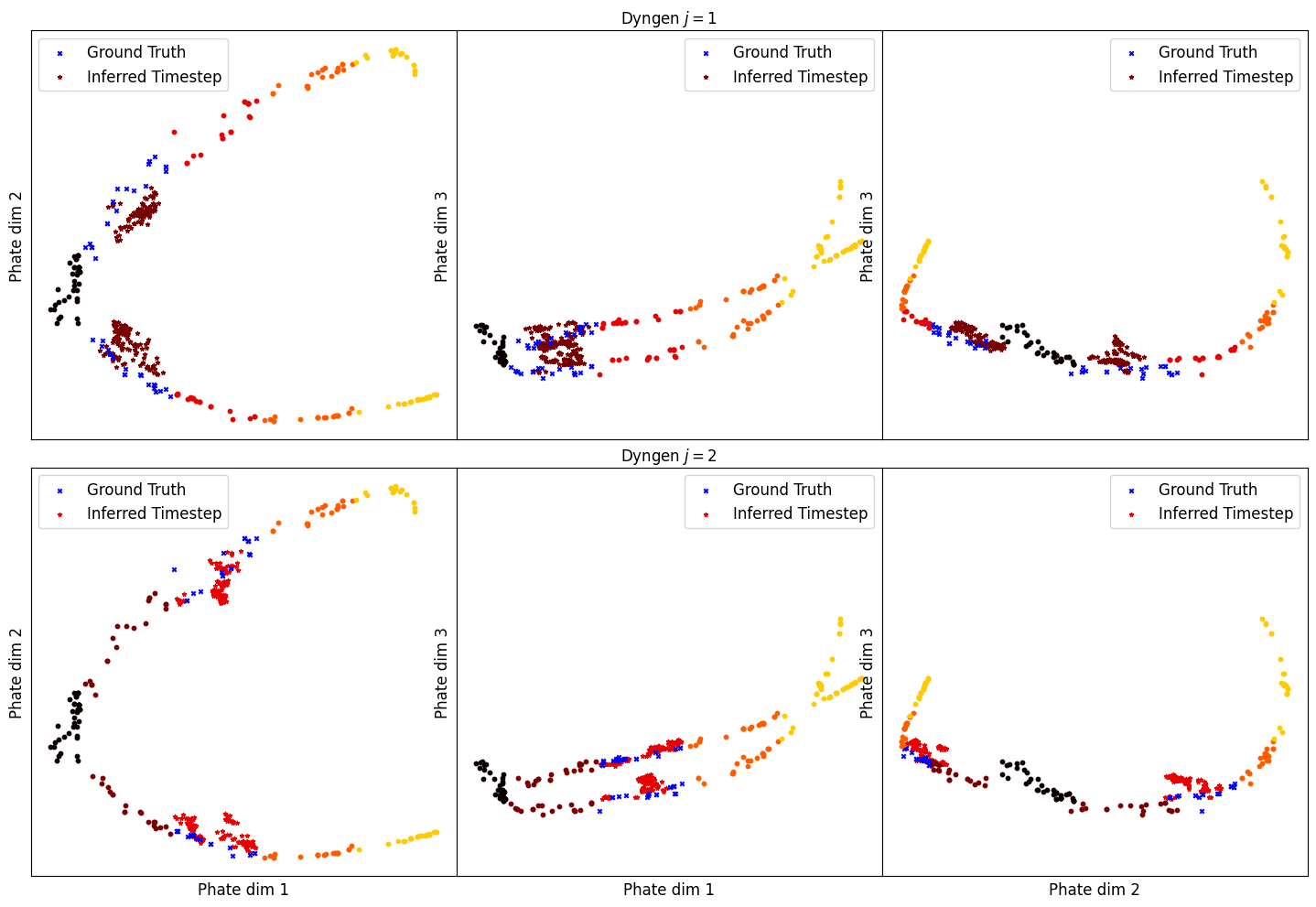}
	
	\caption{Visualization of sampled timestep in the LOT tasks for Dyngen datasets. The blue 'x' points are the left-out ground truth, the star points are inferred timestep by our algorithm.}
    \label{FIG:visual_all_lot2}
\end{figure}

\textbf{Impact of scale hyper-parameter $c$:} Recall that the variance of the Matérn kernel in our experiments is chosen to be $\sigma = c \sigma_{\mathrm{data}}$ for a positive hyperparameter $c$.
We need to choose an appropriate $c$ such that the algorithm is able to search for the next matching point that constructs a smooth interpolation. If $c$ is too small, then the velocity space of the algorithm searching will not be enough to incorporate the correct particles, this leads to the consequence that some of the particles will not be matched at all. If $c$ is too large, then it will result in more randomness in the algorithm, which will hurt the performance of OBO tracking and blur the underlying geometric structure in pattern matching. See Figure~\ref{FIG:c_compare} on the Petal dataset for illustrations.  In practical implementation, we increase $c$ if we find all the sampled velocities are concentrated at the center of the velocity grids and decrease $c$ if they are concentrated at the border instead.  \\
\begin{figure}[h]
	\centering
	\includegraphics[scale=0.5]{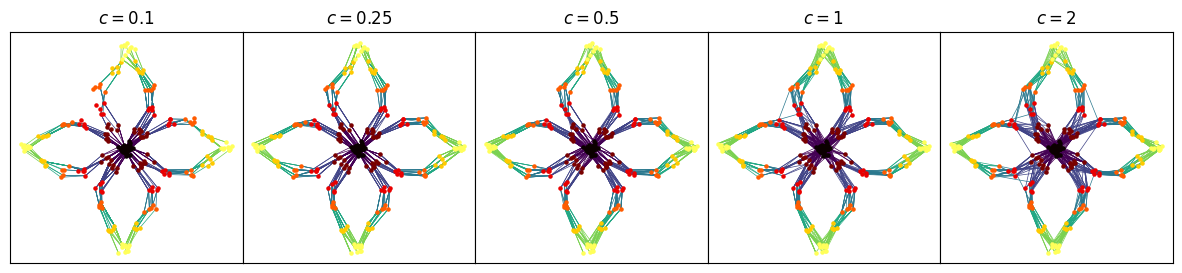}
	
	\caption {Visualization of matching on the Petal dataset. We observe when $c=0.1$ some observations have no matching at all, while when $c=1,2$, the matching contains more randomness and mistakes.}
    \label{FIG:c_compare}
\end{figure}
\\
\textbf{Time and Space Complexity Analysis:} our algorithm spends most of the time running the message passing algorithm. The running time complexity of one iteration of message passing is given by $O(KM^2n^2)$. Since we use dimension independent kernel, the tensor can be factorized across dimensions, we take $M_i = M^\frac{1}{d}$, and a single iteration in this case has complexity $O(dTM^{1+\frac{1}{d}}n^2)$, this allows us to take more coefficient when dimension increases. We give the run time record for one iteration of message passing when $T=10$ and $N=20$ in Table~\ref{tb:runtime}

\begin{table}[h!]
\centering
\caption{Running time record (in seconds) of one iteration of message passing, $M_i$ is taken to be $round(M^{1/d})$. One observes that the running time decreases for fixed $M$ when $d$ increases due to the dimension independence.}
\label{tb:runtime}
\begin{tabular}{|c|c|c|c|}
\toprule
&d=1& d=2 & d=3   \\

\midrule
M=1024& 80.91 & 6.32 & 5.21 \\
M=512 & 20.87 & 2.59 & 1.29 \\
M=256 & 5.36 &  0.82 & 0.45 \\
M=128 & 1.42 &  0.26 & 0.26  \\  
\bottomrule
\end{tabular}
\end{table}

\end{document}